\newif\ifred
\titleformat*{\paragraph}{\bfseries}
\newcommand{\new}[1]{#1}
\newcommand{\new}[1]{#1}
\pgfplotsset{compat=1.17}
\definecolor[named]{ACMBlue}{cmyk}{1,0.1,0,0.1}
\definecolor[named]{ACMYellow}{cmyk}{0,0.16,1,0}
\definecolor[named]{ACMOrange}{cmyk}{0,0.42,1,0.01}
\definecolor[named]{ACMRed}{cmyk}{0,0.90,0.86,0}
\definecolor[named]{ACMLightBlue}{cmyk}{0.49,0.01,0,0}
\definecolor[named]{ACMGreen}{cmyk}{0.20,0,1,0.19}
\definecolor[named]{ACMPurple}{cmyk}{0.55,1,0,0.15}
\definecolor[named]{ACMDarkBlue}{cmyk}{1,0.58,0,0.21}
\crefname{ineq}{Inequality}{Inequality}
\crefname{sub}{Subsection}{Subsection}
\crefname{sdp}{SDP}{SDP}
\crefname{lp}{LP}{LP}
\crefname{ineq}{Inequality}{Inequality}
\crefname{sub}{Subsection}{Subsection}
\crefname{sdp}{SDP}{SDP}
\crefname{lp}{LP}{LP}
\newenvironment{Ualgorithm}[1][htpb]{\def\@algocf@post@ruled{\kern\interspacealgoruled\hrule  height\algoheightrule\kern3pt\relax}\def\@algocf@capt@ruled{under}\setlength\algotitleheightrule{0pt}\SetAlgoCaptionLayout{centerline}\begin{algorithm}[#1]}
{\end{algorithm}}
\DeclarePairedDelimiter\ceil{\lceil}{\rceil}
\newcommand{\CS}{Cauchy-Schwarz\xspace}
\newtheorem{theorem}{Theorem}[section]
\newtheorem{lemma}[theorem]{Lemma}
\newtheorem{informal theorem}[theorem]{Theorem (informal statement)}
\newtheorem{proposition}[theorem]{Proposition}
\newtheorem{claim}[theorem]{Claim}
\newtheorem{fact}[theorem]{Fact}
\newtheorem{remark}[theorem]{Remark}
\newtheorem{definition}[theorem]{Definition}
\newcommand{\lp}{\left}
\newcommand{\rp}{\right}
\newcommand\snorm[2]{\left\| #2 \right\|_{#1}}
\renewcommand\vec[1]{\mathbf{#1}}
\DeclareMathOperator*{\pr}{\mathbf{Pr}}
\DeclareMathOperator*{\E}{\mathbf{E}}
\def\d{\mathrm{d}}
\newcommand{\normal}{\mathcal{N}}
\DeclareMathOperator*{\argmin}{argmin}
\newcommand{\bx}{\mathbf{x}}
\newcommand{\bw}{\mathbf{w}}
\newcommand{\R}{\mathbb{R}}
\newcommand{\N}{\mathbb{N}}
\newcommand{\eps}{\epsilon}
\newcommand{\poly}{\mathrm{poly}}
\newcommand{\polylog}{\mathrm{polylog}}
\newcommand{\sgn}{\mathrm{sign}}
\newcommand{\sign}{\mathrm{sign}}
\newcommand{\OPT}{\mathrm{opt}}
\newcommand{\D}{D}
\newcommand{\wt}{\widetilde}
\newcommand{\x}{\vec x}
\newcommand{\w}{\vec w}
\newcommand{\opt}{\mathrm{opt}}
\newcommand{\Cagn}{C_{\mathrm{A}}}
\newcommand{\iid}{{i.i.d.}\ }
\newcommand{\abs}[1]{\lp| #1 \rp|}
\DeclareMathOperator{\Var}{Var}
\renewcommand\Pr{\pr}
\title{Efficient Testable Learning of Halfspaces \\ with Adversarial Label Noise}
\author{
Ilias Diakonikolas\thanks{Supported by NSF Medium Award CCF-2107079,
NSF Award CCF-1652862 (CAREER), a Sloan Research Fellowship, and
a DARPA Learning with Less Labels (LwLL) grant.}\\
UW Madison\\
{\tt ilias@cs.wisc.edu}\\
\and
Daniel M. Kane\thanks{Supported by NSF Award CCF-1553288 (CAREER) and a Sloan
  Research Fellowship.}\\
UC San Diego\\
{\tt dakane@ucsd.edu }\\
\and
Vasilis Kontonis\thanks{Supported in part by NSF Award CCF-2144298 (CAREER).}\\
UW Madison\\
{\tt kontonis@wisc.edu } \\
\and
Sihan Liu\\
UC San Diego\\
{\tt sil046@ucsd.edu}
\and
Nikos Zarifis\thanks{Supported in part by NSF Award CCF-1652862 (CAREER) 
and a DARPA Learning with Less Labels (LwLL) grant.}\\
UW Madison\\
{\tt zarifis@wisc.edu}\\
}
\begin{document}

\maketitle

\begin{abstract}
We give the first polynomial-time algorithm for the testable learning 
of halfspaces in the presence of adversarial label noise under the 
Gaussian distribution. In the recently introduced testable learning 
model, one is required to produce a 
tester-learner such that if the data passes the tester, 
then one can trust the output of the robust learner on the data.
Our tester-learner runs in time $\poly(d/\eps)$ and outputs a 
halfspace with misclassification error $O(\opt)+\eps$, where $\opt$ is 
the 0-1 error of the best fitting halfspace. 
At a technical level, our algorithm employs an iterative soft localization technique 
enhanced with appropriate testers to ensure that the data distribution is sufficiently 
similar to a Gaussian.

\end{abstract}

\setcounter{page}{0}
\thispagestyle{empty}
\newpage

\section{Introduction}

A (homogeneous) halfspace is a
Boolean function $h: \R^d \to \{ \pm 1\}$ of the form 
$h_{\bw}(\bx) = \sgn \left( \bw \cdot \bx \right)$, 
where $\bw \in \R^d$ is the corresponding weight vector and 
the function $\sign: \R \to \{ \pm 1\}$ is defined as $\sgn(t)=1$ if $t \geq 0$ 
and $\sgn(t)=-1$ otherwise.
Learning halfspaces from random labeled examples is a classical task in machine learning, 
with history going back to the Perceptron algorithm~\cite{Rosenblatt:58}. 
In the realizable PAC model~\cite{val84}
(i.e., with consistent labels), the class of halfspaces
is known to be efficiently learnable without distributional assumptions.
On the other hand, in the agnostic (or adversarial label noise) model~\cite{Haussler:92, KSS:94} even {\em weak} learning is computationally intractable in the distribution-free setting~\cite{Daniely16, DKMR22b, Tiegel22}.

These intractability results have served as a motivation for the study of agnostic learning in the distribution-specific setting, i.e., when the marginal distribution on examples is assumed to be well-behaved. In this context, a number of algorithmic results are known. 
The $L_1$-regression algorithm of~\cite{KKMS:08} agnostically learns halfspaces 
within near-optimal 0-1 error of $\opt+\eps$, where $\opt$ is the 0-1 error of the best-fitting halfspace.
The running time of the $L_1$-regression algorithm is $d^{\tilde{O}(1/\eps^2)}$ under the assumption
that the marginal distribution on examples is the standard Gaussian (and for a few other structured distributions)~\cite{DGJ+:10, DKN10}. While the $L_1$ regression method leads to improper learners, a proper agnostic learner with qualitatively similar complexity was recently given in~\cite{DKKTZ21}. 
The exponential dependence on $1/\eps$ in the running time of these algorithms is known to be inherent, in both the Statistical Query model~\cite{DKZ20, GGK20, DKPZ21} and under standard cryptographic assumptions~\cite{DKR23}.

Interestingly, it is possible to circumvent the super-polynomial dependence on $1/\eps$ by relaxing the final error 
guarantee --- namely, by obtaining a hypothesis with 0-1 error $f(\opt) +\eps$, for some function $f(t)$ that goes
to $0$ when $t \rightarrow 0$. (Vanilla agnostic learning corresponds to the case that $f(t)=t$.) 
A number of algorithmic works, 
starting with~\cite{KLS:09jmlr}, developed 
efficient algorithms with relaxed error guarantees; see, e.g.,~\cite{ABL17, Daniely15, DKS18a, DKTZ20c}. 
The most relevant results in the context of the current paper are the works~\cite{ABL17, DKS18a} which gave $\poly(d/\eps)$ time algorithms with error $C \opt+\eps$, for some universal constant $C>1$, for learning halfspaces with adversarial label noise under the Gaussian distribution. Given the aforementioned computational hardness results, these constant-factor approximations are best possible within the class of polynomial-time algorithms.

A drawback of distribution-specific agnostic learning is that it provides 
no guarantees if the assumption on the marginal distribution 
on examples is not satisfied. 
Ideally, one would additionally like an efficient method to {\em test} these 
distributional assumptions, so that: 
(1) if our tester accepts, then we can trust the output of the learner, 
and (2) it is unlikely that the tester rejects if the data satisfies the distributional assumptions. 
This state-of-affairs motivated the definition of a new model --- 
introduced in~\cite{RV22a} and termed {\em testable learning} --- formally
defined below:

\begin{definition}[Testable Learning with Adversarial Label Noise~\cite{RV22a}]
Fix $\epsilon, \tau \in (0,1]$ and let $f:[0,1] \mapsto \R_+$.
A tester-learner $\cal{A}$ (approximately) {\em testably learns} a concept class 
$\mathcal C$ with respect to the distribution $D_\x$ on $\R^d$
 with $N$ samples and failure probability $\tau$ 
if for any distribution $D$ on $\R^d\times\{\pm1\}$,
the tester-learner $\cal A$ draws a set $S$ of $N$ i.i.d.\ samples from $D$ 
and either rejects $S$ or accepts $S$ and produces a hypothesis $h:\R^d \mapsto \{\pm1\}$. Moreover, the following conditions must be met:
\begin{itemize}
    \item (Completeness)  If $D$ truly has marginal $D_\x$, $\cal{A}$ accepts with probability at least $1-\tau$.
    \item (Soundness) 
    \new{The probability that} $\cal A$ accepts and outputs a hypothesis $h$ \new{for which}  $\pr_{(\x,y) \sim D}[h(\x) \neq y] > f(\OPT) +\eps$ ,  where 
    $\opt: = \min_{g \in \cal{C}} \pr_{(\x,y) \sim D}[g(\x) \neq y]$ \new{is at most $\tau$}.
\end{itemize}
The probability in the above statements is over the randomness of the sample $S$ 
and the internal randomness of the tester-learner $\cal A$.
\end{definition}

The initial work~\cite{RV22a} and the followup paper~\cite{Gollakota2022} focused
on the setting where $f(t) = t$ (i.e., achieving optimal error of $\opt+\eps$). These works developed general moment-matching based algorithms that yield testable learners for a range of concept classes, including halfspaces. For the class of halfspaces
in particular, they gave a testable agnostic learner under the Gaussian distribution with
sample complexity and runtime $d^{\tilde{O}(1/\eps^2)}$ --- essentially matching the complexity of the problem in the standard agnostic PAC setting (without the testable requirement).
Since the testable learning setting is at least as hard as the standard PAC setting, the aforementioned hardness results imply that the exponential complexity dependence in $1/\eps$ cannot be improved.

In this work, we continue this line of investigation. 
We ask whether we can obtain {\em fully polynomial time} testable learning algorithms with relaxed error guarantees --- ideally matching the standard (non-testable) learning setting. Concretely, we study the following question:
\begin{center}
{\em Is there a $\poly(d/\eps)$ time tester-learner for halfspaces with error $f(\opt)+\eps$? \\ 
Specifically, is there a constant-factor approximation?}
\end{center}
As our main result, we provide an affirmative answer to this question 
in the strongest possible sense --- 
by providing an efficient constant-factor approximate tester-learner.

\paragraph{Main Result}
Our main result is the first polynomial-time tester-learner
for homogeneous halfspaces with respect to the Gaussian distribution in the presence of adversarial label noise.
Formally, we establish the following theorem:

\begin{theorem} [Testable Learning Halfspaces under Gaussian Marginals]\label{thm:testable-learning-agnostic}
Let $\eps, \tau \in (0,1)$ and $\mathcal C$ be the class of homogeneous halfspaces on $\R^d$.
There exists a $\poly(d, 1/\eps) \log(1/\tau)$-time
tester-learner for $\mathcal C$ 
with respect to $\normal(\vec 0, \vec I)$ up to 0-1 error $O(\OPT) + \eps$, where $\OPT$ is the 0-1 error of the best fitting function in $\mathcal{C}$ and $\tau$ is the failure probability.
\end{theorem}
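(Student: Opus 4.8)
The plan is to take the soft-localization approach that yields (non-testable) constant-factor agnostic learners for halfspaces under the Gaussian \cite{ABL17, DKS18a} and turn every distributional fact it uses about the Gaussian into an efficiently checkable test. The tester-learner runs for $T = \poly(1/\eps)$ rounds, maintaining a unit vector $\vec v^{(t)}$ as the current guess for $\wstar$. In round $t$ it draws fresh samples, \emph{soft-localizes} them by reweighting each example $(\x,y)$ by a smooth nonnegative bump $r_t(\x)$ that depends only on $\vec v^{(t)}\cdot\x$ and is concentrated at a scale $\sigma_t$ (comparable to the current angle between $\vec v^{(t)}$ and $\wstar$) around the hyperplane $\vec v^{(t)}\cdot\x = 0$, and then updates the guess using the localized degree-one moment $\E[\,r_t(\x)\,y\,\x\,]$ (equivalently, the minimizer of a localized convex surrogate), suitably normalized and with its $\vec v^{(t)}$-component handled so the update moves toward $\wstar$. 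Before using a round the algorithm runs a battery of tests on the reweighted \emph{empirical} distribution; if any test fails, it rejects. At the end it returns the halfspace among $\{\sgn(\vec v^{(t)}\cdot\x)\}_{t\le T}$ with smallest empirical $0$-$1$ error, which is reliably estimable once the marginal passes the tests. Since the angle is unknown, $\sigma_t$ is set from an estimate of the error of $\sgn(\vec v^{(t)}\cdot\x)$ (which, after the marginal tests pass, pins down the angle up to an $O(\opt)$ additive term), or via a geometric schedule of guesses for $\sigma$.

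The tests certify a short list of \emph{constant-degree} conditions that jointly stand in for the analytic Gaussian facts used classically. Crucially, localization is what makes a constant-size list suffice: unlike vanilla agnostic learning, which needs a degree-$\Theta(1/\eps^2)$ polynomial approximator and hence degree-$\Theta(1/\eps^2)$ moment matching, the localized analysis only needs the reweighted distribution to be structurally well-behaved. The conditions are (i) closeness of the $O(1)$-degree moments of $D_\x$ to those of $\normal(\vec 0,\vec I)$, which in particular controls the total mass $\E[r_t(\x)]$ and the second moment of each reweighting, so the localized moment estimator concentrates and is not skewed by rare regions; (ii) a spectral check that, on the localized distribution, the covariance of $\x$ is close in every direction to the truncated-Gaussian covariance at scale $\sigma_t$, giving the uniform variance bound needed for concentration of the update; and (iii) a (testable) anti-concentration / sandwiching bound guaranteeing $\Pr_{D_\x}[\,|\vec u\cdot\x|\le\sigma\,] = O(\sigma)$ for all unit $\vec u$, so that $\opt$ restricted to the localization band stays $O(\opt)$ and the ``clean'' part of the localized signal $\E[r_t(\x)\,\sgn(\wstar\cdot\x)\,\x]$ is comparable to its Gaussian value. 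Completeness follows from standard empirical concentration for low-degree moments and covariance matrices under the Gaussian (hypercontractivity and VC/covering arguments), giving the $\log(1/\tau)$ dependence on the failure probability.

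The heart of the proof is a per-round progress lemma: conditioned on every test in round $t$ passing, either $\sgn(\vec v^{(t)}\cdot\x)$ already has $0$-$1$ error $O(\opt)+\eps$, or the angle between $\vec v^{(t+1)}$ and $\wstar$ is at most $(1-c)$ times the angle between $\vec v^{(t)}$ and $\wstar$, up to an additive $O(\eps)$ floor, for an absolute constant $c>0$. The proof parallels the Gaussian case: decompose $\vec v^{(t)}$ into its $\wstar$-component and the orthogonal remainder; use the sandwiching/anti-concentration condition (iii) to lower bound the correlation of the clean localized moment $\E[r_t(\x)\sgn(\wstar\cdot\x)\x]$ with $\wstar$; use the moment-closeness condition (i) to upper bound the contribution of the noisy examples (which carries a factor $\opt$ because so little mass is mislabeled); and use the covariance condition (ii) to upper bound the norm of the update; combining these shows the normalized update makes a definite angular step toward $\wstar$. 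Iterating over $T=\poly(1/\eps)$ rounds (with an outer loop over scales) drives the angle to $O(\opt)+\eps$, which by (iii) translates into $0$-$1$ error $O(\opt)+\eps$ for the returned halfspace.

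The main obstacle is precisely the design of conditions (i)--(iii) and the \emph{soundness} direction of the progress lemma: one must show that all the exact Gaussian identities used classically --- truncated-moment formulas, rotational invariance, explicit Gaussian tail and anti-concentration bounds --- can be replaced by these finitely many, poly-time-checkable inequalities without losing the constant-factor guarantee; casting (ii) and (iii) in a genuinely testable form (estimating a covariance spectrum, and certifying a slab lower/upper bound uniformly over all directions) while keeping them strong enough is the delicate part. A secondary difficulty is that the bump $r_t$ is data-dependent through $\vec v^{(t)}$, so the tests concern a distribution the algorithm only accesses via samples; this is handled by re-testing with fresh samples each round (or by a direction-uniform version of the tests) together with a union bound over the $\poly(1/\eps)$ rounds, keeping the overall sample complexity $\poly(d/\eps)$.
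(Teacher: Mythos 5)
Your overall architecture (iterative soft localization, a degree-one/Chow-type update, label-dependent per-round tests, and a final selection by empirical 0-1 error) matches the paper's, but the proposal leans on a condition whose testability is assumed rather than established, and it sits exactly where the difficulty lies. Condition (iii) asks to certify $\Pr_{\x \sim D_\x}[\,|\vec u \cdot \x| \le \sigma\,] = O(\sigma)$ \emph{for all unit vectors} $\vec u$, at scales as small as $\sigma = \Theta(\eps)$, and your progress lemma additionally invokes it in the unknown direction $\wstar$ to lower bound the clean localized signal. Constant-degree moment matching cannot certify anti-concentration at scale $\sigma$ in unknown directions (one would have to approximate band indicators by polynomials, forcing degree $\poly(1/\sigma)$ and time $d^{\poly(1/\eps)}$ --- precisely the obstruction the paper flags), and no other $\poly(d/\eps)$-time certificate is proposed. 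The paper is engineered never to need such a statement in an unknown direction: its weak proper learner proves that whenever constant-degree moments match the Gaussian, the degree-one Chow vector of \emph{any} homogeneous halfspace is close to $\sqrt{2/\pi}$ times its defining vector (via the sign-integral representation plus the sandwiching bound for functions of two halfspaces), and label noise is absorbed by a Cauchy--Schwarz robust Chow estimate under a tested second-moment bound; this yields an $\ell_2$ parameter guarantee with no anti-concentration requirement in unknown directions.

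There is a second gap in the final conversion from parameter distance $\theta$ to 0-1 disagreement. Even granting your (iii) together with unconditional second-moment bounds in every direction, a dyadic decomposition of the wedge only yields mass $O(\theta^{2/3})$ (balancing a band mass of order $\theta 2^{j}$ against a tail bound of order $2^{-2j}$), i.e., error $O(\opt^{2/3}) + \eps$ rather than $O(\opt) + \eps$. The linear bound needs second-moment control of the orthogonal projection \emph{conditionally on thin slabs} in the known direction $\vec v$: the paper's wedge-bound test checks only one-dimensional interval probabilities along $\vec v$ and that each slab's conditional covariance orthogonal to $\vec v$ is $\preccurlyeq 2 \vec I$ (both efficiently testable because $\vec v$ is known), and then applies Chebyshev in the unknown direction, summing $O(\eta) \sum_i 1/i^2$. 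Your condition (ii) is in this spirit but is stated for the bump-reweighted covariance at the current scale during the iterations, not per slab at the final scale, and your conversion step cites only (iii). If you replace (iii) by the known-direction slab test and route the per-round progress through the Chow-stability argument, the plan becomes essentially the paper's proof; as written, the two steps above are genuine gaps.
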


Before we provide an overview of our technical approach, some 
remarks are in order. \Cref{thm:testable-learning-agnostic} 
gives the first algorithm for testable learning of halfspaces 
that runs in $\poly(d/\eps)$ time and achieves dimension-independent 
error (i.e., error of the form $f(\opt)+\eps$, where $f$ satisfies 
$\lim_{t \rightarrow 0}f(t)=0$.) Moreover, the constant-factor approximation 
achieved is best possible, matching the known guarantees without 
the testable requirement and complexity lower bounds.  
Prior to our work, the only known result in the testable setting, 
due to~\cite{RV22a, Gollakota2022}, achieves error $\opt+\eps$ 
with complexity $d^{\poly(1/\eps)}$. A novel (and seemingly necessary) 
feature of our approach is that 
the testing components of our algorithm depend on the labels (as opposed to the
label-oblivious testers of ~\cite{RV22a, Gollakota2022}). 
As will be explained in the proceeding 
discussion, to prove \Cref{thm:testable-learning-agnostic} 
we develop a testable version of the well-known localization technique 
that may be of broader interest.

\paragraph{Independent Work} In concurrent and independent work,~\cite{GKSV23} 
gave an efficient tester-learner for homogeneous halfspaces under the Gaussian 
distribution (and strongly log-concave distributions) achieving dimension-independent 
error guarantees. Specifically, their algorithm achieves 0-1 error 
$O(k^{1/2} \opt^{1-1/k})$ with sample complexity and running time of 
$\poly(d^{\widetilde{O}(k)}, (1/\eps)^{\widetilde{O}(k)})$. 
That is, they obtain error $O(\opt^{c})$, where $c<1$ is a universal constant, in 
$\poly_c(d/\eps)$ time; and error $\new{\wt{O}(\opt)}$ 
in quasi-polynomial $(d/\eps)^{\polylog(d)}$ time.

\subsection{Overview of Techniques}

Our tester-learner is based on the well-known localization technique that has been used in the context of learning halfspaces with noise; see, e.g., 
\cite{ABL17,DKS18a}.  At a high-level, the idea of localization hinges on updating a given hypothesis by using ``the most informative'' examples, specifically examples that have very small margin with respect to the current hypothesis.  Naturally, the correctness of this
geometric technique leverages structural properties of the underlying 
distribution over examples, namely concentration, anti-concentration, and 
anti-anti-concentration properties (see, e.g., \cite{DKTZ20}). While the 
Gaussian distribution satisfies these properties, they are unfortunately 
hard to test.  In this work, we show that localization can be effectively 
combined with appropriate efficient testing routines to provide an efficient tester-learner.

\paragraph{Localization and a (Weak) {\em Proper} Testable Learner} 
Assume that we are given a halfspace defined by the unit vector 
$\vec w$ with small 0-1 error, namely
$\pr_{\x\sim D_\x}[\sign(\vec v^\ast \cdot\x) \neq \sgn(\vec w \cdot \x)] \leq \delta$, for some small $\delta>0$, where $\vec v^{\ast}$ is the unit 
vector defining an optimal halfspace.
The localization approach improves the current hypothesis, 
defined by $\vec w$, by considering the conditional distribution $D'$ 
on the points that fall in a thin slice around 
$\vec w$, i.e., the set of points $\x$ satisfying $|\vec w \cdot \x | \leq O(\delta)$.   
The goal is to compute a new (unit) weight vector $\vec w'$ that is close to an optimal halfspace, defined by $\vec v^\ast$,
with respect to $D'$, i.e., 
$\pr_{\x'\sim D'_\x}[\sign(\vec v^\ast \cdot\x') \neq \sgn(\vec w' \cdot \x')] 
\leq \alpha$, for an appropriate $\alpha>0$. 
We can then show that the halfspace defined by $\vec w'$ will be {\em closer} to the target halfspace (defined by $\vec v^\ast$) with respect to the {\em original} distribution, i.e., we have that 
$\pr_{\x\sim D_\x}[\sign(\vec v^\ast \cdot\x) \neq \sgn(\vec w' \cdot \x)] \leq O(\delta \alpha)$.  By repeating the above step, we  iteratively reduce 
the disagreement with $\vec v^\ast$ until we reach our target error of $O(\opt)$.
Similarly to \cite{DKS18a}, instead of ``hard'' conditioning on a thin slice, we  perform a ``soft'' localization step where (by rejection sampling) we transform the $\x$-marginal to a Gaussian
whose covariance is $O(\delta^2)$ in the direction of $\vec w$ and identity in 
the orthogonal directions, i.e., $\vec \Sigma = \vec I - (1-\delta^2) \vec w \vec w^\top$; see \Cref{lem:rejection-sampling}.  

A crucial ingredient of our approach is a \textbf{{\em proper} testable, weak agnostic learner}
with respect to the Gaussian distribution.  More precisely, our tester-learner runs in polynomial time and either
reports that the $\x$-marginal is not $\normal(\vec 0, \vec I)$ or outputs 
a unit vector $\vec w$ \new{with small constant} distance to the target $\vec v^\ast$, i.e.,
$\|\vec w - \vec v^\ast\|_2 \leq 1/100$; see \Cref{lem:constant-learn}.
Our weak proper tester-learner first verifies that the given $\x$-marginal
approximately matches 
constantly many low-degree moments with the standard Gaussian; and if it does, it 
returns the vector defined by the degree-$1$ Chow parameters, i.e.,
$\vec c = \E_{(\x, y) \sim \D}[y \vec x ]$.
Our main structural result in this context shows that if $D_\x$ approximately 
matches its low-degree moments with the standard Gaussian,  
then the Chow parameters of any homogeneous LTF 
with respect to $D_\x$ are close to its Chow parameters with respect to $\normal(\vec 0, \vec I)$, i.e.,
for any homogeneous LTF $f(\x)$, we have that 
$\E_{\x \sim D_{\bx}}[ f(\x)\x] \approx \E_{\x^\ast \sim \cal{N}(\vec{0}, \vec{I})}[f(\x^\ast)\x^\ast ]$; see \Cref{lem:moment-robust-chow}.
Since the Chow vector of a homogeneous LTF with respect to the Gaussian distribution 
 is parallel to its normal vector $\vec v^\ast$  (see \Cref{lem:unbiased-chow}),
 it is not hard to show that the Chow vector of the LTF \new{with respect to $D_{\vec x}$} will not be very far from 
 $\vec v^\ast$ and will satisfy the (weak) learning guarantee of 
 $\|\vec c - \vec v^\ast \|_2 \leq 1/100$.
Finally, \new{to deal with label noise}, we show that if $\x'$ has bounded second moments (a condition that we can efficiently test), we can robustly estimate $\E_{\x\sim D_\x}[ f(\x)\x]$ with samples from $D$ up to error $O(\sqrt{\opt})$ (see \Cref{lem:robust-chow}), which suffices for our purpose of weak learning.  The detailed description 
of our weak, proper tester-learner can be found in \Cref{sec:weak-proper-tester-learner}.

\paragraph{From Parameter Distance to Zero-One Error}
Having a (weak) testable proper learner, 
we can now use it on the localized 
(conditional) distribution $D'$ and obtain a vector $\vec w'$ 
that is closer to $\vec v^\ast$ in $\ell_2$ distance; see \Cref{lem:localization}.
However, our goal is to obtain a vector that has small zero-one disagreement 
with the target halfspace $\vec v^\ast$.  Assuming that the underlying $\x$-marginal
is a standard normal distribution, and that 
$\|\vec w - \vec v^\ast\|_2 = \delta$, it holds that 
$\pr_{\x\sim D_\x}[\sign(\vec w\cdot \x) \neq \sign(\vec v^\ast\cdot\x)]
= O(\delta)$, which implies that 
achieving $\ell_2$-distance $O(\opt) + \eps$ suffices.
We give an algorithm that can efficiently \new{either certify that}
small $\ell_2$-distance
implies small zero-one disagreement with respect to the given marginal $D_\x$
or declare that $D_\x$ is not the standard normal.

      \definecolor{mycyan}{RGB}{42,161,152}
\definecolor{myred}{RGB}{220,50,47}
\begin{figure}
	\centering
	\begin{tikzpicture}[scale=1, every node/.style={transform shape} ]
	\coordinate (start) at (0.5,0.5);
	\coordinate (center) at (0,0);
	\coordinate (end) at (0,0.7);
	\coordinate (end2) at (-0.12/0.7634,0.366/0.7634);
	\coordinate (start2) at (0,0.7);
	\coordinate (start3) at (3,0);
	\coordinate (end3) at (45:3);
	\coordinate (start4) at (135:2);
	\coordinate (end4) at (-2,0);
\draw[fill=mycyan, opacity=0.6] (0,0) -- (3,0) arc (0:45:3.0cm)--cycle;
 \draw[fill=mycyan, opacity=0.6] (0,0) -- (-2,0) arc (180:225:2.0cm)--cycle;

\draw[black,thick,-] (-2,0) -- (3,0) ;
 	\draw[black,dashed,-] (-2,0.5) -- (3,0.5) ;
  	\draw[black,dashed,-] (-2,1) -- (3,1) ;
    	\draw[black,dashed,-] (-2,1.5) -- (3,1.5) ;

\draw[blue,thick,-] (0,0.5) -- (3,0.5) ;
     \draw[blue,thick,-] (0.5,0.5) -- (0.5,1) ;
      \draw[blue,thick,-] (0.5,1) -- (3,1) ;
\draw[blue,thick,-] (1,1) -- (1,1.5) ;
           \draw[blue,thick,-] (1,1.5) -- (3,1.5) ;
     \draw[black,thick,-] (0,-1) -- (0,2.5) ;
     \draw[blue,dashed,-] (1,1.5)--(1,0)  node[below ]{$\Theta(i)$};
\draw[black,thick,->] (0,-0) -- (-0.45,0.45) node[anchor= south ] {$\vec v^\ast$};
 \draw[black,thick,->] (0,-0) -- (0.5,0) node[anchor= south ] {$\vec u$};
\draw[purple,thick,<->] (-2,0) -- (-2,0.5)node[anchor= south east,below left] {$\delta$};
\draw[purple,thick,<->] (-2,0.5) -- (-2,1)node[anchor= south east,below left] {$\delta$};
\draw[purple,thick,<->] (-2,1) -- (-2,1.5)node[anchor= south east,below left] {$\delta$};

	\draw[black] (-1,-1) -- (2,2);
	\draw[black,thick ,->] (0,0) -- (0,0.5) node[above right] {$\bw$};
\pic [draw, <->, angle radius=15mm, angle eccentricity=1.3, "$\theta = \Theta(\delta)$"] {angle = start3--center--end3};
\end{tikzpicture}
	\caption{The disagreement region between a halfspace with normal vector $\vec w$ and the target $\vec v^\ast$ is shown in green.
 The unit direction $\vec u$ corresponds to the projection of $\vec v^\ast$
 on the orthogonal complement of $\vec w$.
 We assume that the $\ell_2$ distance of the two halfspaces is $\delta$ (and thus their angle is $\Theta(\delta)$). Since the slabs $S_i = \{i \delta \leq |\vec x \cdot \vec w| \leq (i+1) \delta \}$ have width $\delta$, the $x$-coordinate of the start of the $i$-th box is $\Theta(i)$.
 }
	\label{fig:wedge}
\end{figure}
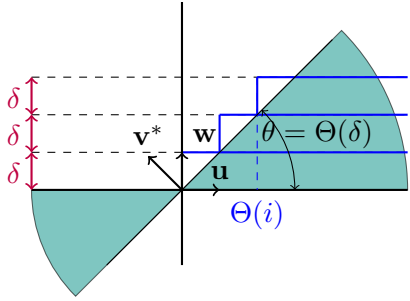 

The disagreement region of $\vec v^\ast$ and $\vec w$ is a union of two ``wedges'' (intersection of two halfspaces); 
see \Cref{fig:wedge}. \new{In order for our algorithm to work, we need to verify that these wedges do not contain too much probability mass.}
Similarly to our approach for the weak tester-leaner, one could try a moment-matching
approach and argue that if $D_\x$ matches its ``low''-degree moments with $\normal(\vec 0, \vec I)$,
then small $\ell_2$-distance translates to small zero-one disagreement.  
However, we will need to use this result for vectors that are very close to the target (but still not close enough), namely 
$\|\vec w - \vec v^\ast\|_2 = \delta$, where $\delta = \Theta(\eps)$; this would
require matching $\poly(1/\delta)$ many moments (as we essentially
need to approximate the wedge of \Cref{fig:wedge} with a polynomial)
and would thus lead to an exponential runtime of $d^{\poly(1/\delta)}$.

Instead of trying to approximate the disagreement region with a polynomial, we will make use of the fact that our algorithm knows $\vec w$ (but not $\vec v^\ast$) and approximate the disagreement
region by a union of cylindrical slabs.  We consider slabs of the form $S_i = \{\x : i \delta \leq 
|\vec w \cdot \x| \leq (i+1) \delta \}$. \new{If} the target distribution is Gaussian, we know that the set 
$|\vec w \cdot \x| \gg \sqrt{\log(1/\eps)}$ has mass $O(\delta)$
and we can essentially ignore it.  Therefore, we can cover the whole space by considering roughly $M =O(\sqrt{\log(1/\delta)}/\delta)$ slabs of width $\delta$ and split the disagreement region into
the disagreement region inside each slab $S_i$.  We have that 
\begin{align*}
\pr_{\x\sim D_\x}[\sign(\vec w\cdot \x) \neq \sign(\vec v^\ast\cdot\x)]
\leq 
\sum_{i=1}^M  \pr[|\vec u \cdot \x| \geq i \mid \x \in S_i] ~ \pr[S_i]
\,,
\end{align*}
where $\vec u$ is the unit direction parallel to the projection of 
the target $\vec v^\ast$ onto the orthogonal complement of $\vec w$,
see \Cref{fig:wedge}.
By the anti-concentration of 
the Gaussian distribution we know that each slab should have mass
at most $O(\delta)$.  Note that this is easy to test by sampling
and computing empirical estimates of the mass of each slab.
Moreover, assuming that underlying distribution is 
$\normal(\vec 0, \vec I)$, we have that, conditional on $S_i$ the orthogonal direction $\vec u\cdot \x \sim \normal(0, 1)$ (see \Cref{fig:wedge}) and in particular $\vec u \cdot \x$ has bounded 
second moment.  We do not know the orthogonal direction $\vec u$
as it depends on the unknown $\vec v^\ast$ but we can check that,
conditional on the slab $S_i$, the projection of $D_\x$ onto the 
orthogonal complement of $\vec w$ is (approximately) mean-zero and has 
bounded  covariance (i.e., bounded above by $2 \vec I$).
Note that both these conditions hold when $\x \sim \normal(\vec 0, \vec I)$ 
and can be efficiently tested with samples \new{in time $\poly(d, 1/\delta)$}. 
Under those conditions we have that that $\pr[S_i] = O(\delta)$
for all $i$.  Moreover, when the conditional distribution
on $S_i$ (projected on the orthogonal complement of $\vec w$)
has bounded second moment, we have that 
$
\pr[|\vec u \cdot \x| \geq i \mid \x \in S_i] 
\leq O(1/i^2)\,.
$
Combining the above, we obtain that under those assumptions 
the total probability of disagreement is at most $O(\delta)$.
The detailed analysis is given in \Cref{ssec:distances}.

\subsection{Preliminaries} \label{ssec:prelims}
We use small boldface characters for vectors and capital bold characters for matrices.
We use $[d]$ to denote the set $\{1, 2, \ldots, d\}$.
For a vector $\vec x \in \R^d$ and $i \in [d]$, $\vec x_i$ denotes the $i$-th coordinate of $\vec x$, and $\|\vec x\|_2 := \sqrt{ \sum_{i=1}^d \vec x_i^2 }$ the $\ell_2$ norm of $\vec x$.
We use $\vec x \cdot \vec y := \sum_{i=1}^n \vec x_i  \vec y_i$ as the inner product between them. We use $\mathbbm 1\{ E  \}$  to denote the indicator function of some event $E$. 

We use $\E_{\vec x \sim D}[\vec x]$ for the expectation of the random variable $\vec x$ according to the distribution $D$ and $\Pr[E]$ for the probability of event $E$. For simplicity of notation, we may omit the distribution
when it is clear from the context. 
For $\vec \mu \in \R^d, \vec \Sigma \in \R^{d \times d}$, we denote by $\normal (\vec \mu, \vec \Sigma)$  the $d$-dimensional Gaussian distribution
with mean $\vec \mu$ and covariance $\vec \Sigma$.
For $(\vec x, y) \in \mathcal X$ distributed according to $D$, we denote $D_{\vec x}$
to be the marginal distribution of $\vec x$.
Let $f: \R^d \mapsto \{\pm 1\}$ be a boolean function and $D$ a distribution over $\R^d$. The degree-$1$ Chow parameter vector of $f$ with respect to $D$ is defined as $\E_{\vec x \sim D}\lp[f(\vec x) \vec x  \rp]$. 
For a halfspace $h(\vec x) = \sgn(\vec v \cdot \vec x)$, we say that $\vec v$ is the defining vector of $h$.

\paragraph{Moment-Matching}  In what follows, we use the phrase 
\emph{``A distribution $D$ on $\R^d$ matches $k$ moments with a distribution $Q$ up to error $\Delta$''}.  Similarly to \cite{Gollakota2022}, we formally define approximate moment-matching as follows.
\begin{definition}[Approximate Moment-Matching]
Let $k\in \mathbb N$ be a degree parameter and let $\mathcal{M}(k, d)$ be the set of $d$-variate monomials of degree up to $k$.  Moreover, let 
$\vec \Delta \in \R_+^{|\mathcal{M}(k,d)|}$ be a slack parameter (indexed by the monomials of $\mathcal{M}(k,d)$), satisfying
$\vec \Delta_0 = 0$.  We say that two distributions $D, Q$ match $k$ moments up
to error $\vec \Delta$ if 
$|\E_{\x \sim D}[m(\x)] - \E_{\x \sim Q}[m(\x)]| \leq \vec \Delta_m$ for every monomial
$m(\x) \in \mathcal{M}(k,d)$.
When the error bound $\Delta$ is the same for all monomials we overload notation and simply use $\Delta$ instead of the parameter $\vec \Delta$.
\end{definition}

\section{Weak Testable Proper Agnostic Learning}
\label{sec:weak-proper-tester-learner}
As our starting point, 
we give an algorithm that performs testable \emph{proper} learning 
of homogeneous halfspaces in the presence of adversarial label noise 
with respect to the Gaussian distribution. 
The main result of this section is the following:

\begin{proposition}[Proper Testable Learner with Adversarial Label Noise]
\label{lem:constant-learn}
Let $D$ be a distribution on labeled examples $(\vec x, y) \in \R^d \times \{ \pm 1\}$.
Suppose that there exists a unit vector $\vec v^\ast\in \R^d$ such that 
$\Pr_{(\vec x, y) \sim D}\lp[ \sgn( \vec v^\ast \cdot \vec x )\neq y \rp] \leq \opt$.
There exists an algorithm (\Cref{alg:proper-testable-learner})
that given $\tau, \eta \in (0,1)$, and 
$N=d^{ \widetilde O(1 / \eta^2) } \log(1/\tau)$ \iid samples from $D$, 
runs in time $\poly(d,N)$ and does one of the following: 
\begin{itemize}[leftmargin=*]
    \item The algorithm reports that the $\x$-marginal of $D$ is not $\normal(\vec 0, \vec I)$.
\item The algorithm outputs a unit vector $\vec w \in \R^d$.
\end{itemize}
With probability at least $1 - \tau$ the following holds: 
(1) if the algorithm reports anything, the report is correct, and 
(2) if the algorithm returns a vector $\vec w$, it holds 
$\snorm{2}{\vec v^\ast - \vec w} \leq \Cagn  \sqrt{\OPT + \eta}$, 
where $\Cagn>0$ is an absolute constant.
\end{proposition}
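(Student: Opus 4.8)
The plan is to implement exactly the three-stage algorithm sketched in the overview --- moment test, robust degree-$1$ Chow estimation, normalization --- and then chain three approximation bounds. First I would draw the $N=d^{\widetilde O(1/\eta^2)}\log(1/\tau)$ samples and run a moment tester: estimate every $d$-variate moment of $D_\x$ of degree at most $k=\widetilde O(1/\eta^2)$, and reject (reporting ``$D_\x\neq\normal(\vec 0,\vec I)$'') unless all these empirical moments agree with those of $\normal(\vec 0,\vec I)$ up to a slack $\Delta$ chosen small enough that \Cref{lem:moment-robust-chow} applies. Since the degree-$\le k$ Gaussian moments have variance $k^{O(k)}$ and there are only $d^{O(k)}$ monomials, a Bernstein-plus-union-bound argument shows that $N$ samples certify each moment to accuracy $\Delta/2$ with failure probability $\le\tau/3$; this simultaneously gives completeness (a genuine Gaussian marginal passes) and, conditioned on acceptance, the promise that $D_\x$ matches $k$ moments of $\normal(\vec 0,\vec I)$ up to $\Delta$. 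I would then invoke the testable robust Chow-estimation routine of \Cref{lem:robust-chow} on the observed pairs $\{(\x_i,y_i)\}$: it either rejects (correctly, since the standard Gaussian has bounded second moment and passes this test with probability $\ge1-\tau/3$) or returns a vector $\hat{\vec c}$ with $\snorm{2}{\hat{\vec c}-\vec c_D}\le C_1\sqrt{\OPT}+\sqrt\eta$, where $\vec c_D\eqdef\E_{\x\sim D_\x}[f(\x)\x]$ is the Chow vector of the clean labeling $f\eqdef\sgn(\vec v^\ast\cdot\ )$ with respect to $D_\x$, the point being that $\{y_i\x_i\}$ is an $\OPT$-fraction corruption of $\{f(\x_i)\x_i\}$, whose population covariance is bounded once $D_\x$ has bounded covariance (as $|f|\equiv1$). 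The output is $\vec w\eqdef\hat{\vec c}/\snorm{2}{\hat{\vec c}}$.

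For soundness I would first dispose of the trivial regime: if $\OPT+\eta$ exceeds a small absolute constant $c_0$, then $\Cagn\sqrt{\OPT+\eta}\ge\sqrt2$ already upper bounds the distance between any two unit vectors, so any output is acceptable; hence assume $\OPT+\eta\le c_0$. The core step is relating $\hat{\vec c}$ to $\vec v^\ast$. By \Cref{lem:unbiased-chow}, the Gaussian Chow vector $\vec c_G\eqdef\E_{\x\sim\normal(\vec 0,\vec I)}[f(\x)\x]$ equals $\kappa\,\vec v^\ast$ for the absolute constant $\kappa=\sqrt{2/\pi}>0$ (the components orthogonal to $\vec v^\ast$ vanish by symmetry of $\normal(\vec 0,\vec I)$, and the parallel component equals $\E_{g\sim\normal(0,1)}[|g|]$). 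Since the moment test accepted, \Cref{lem:moment-robust-chow} applied to the homogeneous LTF $f$ gives $\snorm{2}{\vec c_D-\vec c_G}\le\sqrt\eta$ --- this is precisely the requirement that dictates how small $\Delta$ (hence how large $k$ and $N$) must be. Combining with the Chow-estimation bound via the triangle inequality, $\snorm{2}{\hat{\vec c}-\kappa\vec v^\ast}\le C_1\sqrt{\OPT}+2\sqrt\eta\le C_2\sqrt{\OPT+\eta}$ for an absolute constant $C_2$.

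Finally I would convert this additive closeness of $\hat{\vec c}$ to a multiple of $\vec v^\ast$ into the desired bound on $\snorm{2}{\vec w-\vec v^\ast}$ using the elementary fact that if $\snorm{2}{\vec a-\kappa\vec u}=\beta$ with $\snorm{2}{\vec u}=1$ and $\beta\le\kappa/2$, then $\snorm{2}{\vec a/\snorm{2}{\vec a}-\vec u}\le2\beta/\kappa$ (because $\snorm{2}{\vec a}\ge\kappa-\beta\ge\kappa/2$ and $|\snorm{2}{\vec a}-\kappa|\le\beta$, after writing $\vec a/\snorm{2}{\vec a}-\vec u=\vec a(1/\snorm{2}{\vec a}-1/\kappa)+(\vec a-\kappa\vec u)/\kappa$); shrinking $c_0$ guarantees $\beta\le C_2\sqrt{c_0}\le\kappa/2$. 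Setting $\Cagn\eqdef\max\{2C_2/\kappa,\ \sqrt2/\sqrt{c_0}\}$ handles both the non-trivial and the trivial regimes, and a union bound over the three failure events ($\tau/3$ each) gives the claim with probability $\ge1-\tau$. I expect the only genuinely substantive ingredients to be \Cref{lem:moment-robust-chow} (moment-matching transfers the Chow vector of halfspaces) and, secondarily, \Cref{lem:robust-chow} and the moment-estimation sample bound, all of which are proved separately; modulo those, the argument here is just the above chain of three approximations plus the normalization identity, and the main thing requiring care is bookkeeping --- choosing $\Delta$ and $k$ consistently with the $d^{\widetilde O(1/\eta^2)}$ sample budget and correctly handling the large-$\OPT$ regime.
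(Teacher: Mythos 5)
Your overall plan --- moment test, then the Chow structural lemma (\Cref{lem:moment-robust-chow}), then robust Chow estimation (\Cref{lem:robust-chow}), then normalization, with a separate treatment of the large-$\opt$ regime --- is the same as the paper's, and your explicit handling of the normalization and of the trivial regime is fine. However, there is a genuine gap in the soundness direction of your first stage: you claim that a Bernstein-plus-union-bound argument lets acceptance of the \emph{empirical} moment test certify that the \emph{population} marginal $D_\x$ matches $k$ moments of $\normal(\vec 0,\vec I)$ up to $\Delta$ (and, implicitly, that $D_\x$ has covariance $\preccurlyeq 2\vec I$ so that \Cref{lem:robust-chow} applies at the population level). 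In the testable setting $D$ is arbitrary, so there is no concentration of empirical moments around population moments: take $D_\x=(1-p)\normal(\vec 0,\vec I)+p\,Q$ where $Q$ is a point mass at $R\vec e_1$ with $p\ll 1/N$ and $R$ enormous. With high probability the sample contains no point from $Q$, every empirical moment looks Gaussian and your test accepts, yet the population moments of $D_\x$ --- and the population Chow vector $\vec c_D=\E_{\x\sim D_\x}[f(\x)\x]$, which can be shifted by $pR$ in an arbitrary direction --- are wildly non-Gaussian. Consequently both links of your chain, $\hat{\vec c}\approx\vec c_D$ (which needs bounded second moments of the distribution the samples are drawn from) and $\vec c_D\approx\sqrt{2/\pi}\,\vec v^\ast$ (which needs population moment-matching), fail as stated, even if the final conclusion may still hold for other reasons.

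The paper's proof avoids this by never asserting anything about the population $D_\x$: \Cref{lem:moment-robust-chow} is applied with $B=(\widehat D_N)_\x$, the empirical distribution, whose moments are exactly what the test checked, and \Cref{lem:robust-chow} is likewise applied to $\widehat D_N$, whose bounded covariance follows from the matched low-degree moments. The only sample-versus-population transfer used is that the empirical 0-1 error of the fixed halfspace $\sgn(\vec v^\ast\cdot\x)$ is at most $\opt+\eta$ with high probability, which holds by Hoeffding with no assumption on $D$; and since the final guarantee compares the two vectors $\vec w$ and $\vec v^\ast$, no transfer back to $D_\x$ is ever needed. Your argument is repaired by exactly this substitution --- run every step on $\widehat D_N$ rather than on $D$ --- but as written the soundness argument does not go through. (A minor additional point: \Cref{lem:robust-chow} is a pure estimation lemma under a bounded-covariance hypothesis, with no reject option, so the ``testable routine'' framing of that step should be dropped.)
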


A couple of remarks are in order. 
First, notice that if the algorithm outputs a vector $\vec w$, 
we only have the guarantee that $\snorm{2}{\vec v^\ast - \vec w}$ is small ---
instead of that the hypothesis halfspace $h_{\vec w}(\vec x) = \sgn(\vec w \cdot \vec x)$ achieves small 0-1 error. Nonetheless, as we will show in the next section, 
conditioned on $D$ \new{passing} some test, the error of the halfspace $h_{\vec w}$ 
will be at most $\opt$ plus a constant multiple of $\snorm{2}{\vec v^\ast - \vec w}$ (see \Cref{lem:wedge-bound}).
Second, unlike the testable {\em improper} learners in \cite{RV22a, Gollakota2022} 
--- which achieve error of $\OPT + \eta$ with similar running time and sample complexity ---  
our testable {\em proper} learner achieves the weaker error guarantee of 
$O( \sqrt{\OPT + \eta})$. This suffices for our purposes for the following reason:
in the context of our localization-based approach, 
we only need an efficient proper \emph{weak} learner that 
achieves sufficiently small {\em constant} error. This holds 
for our proper testable learner, 
as long as both $\opt$ and $\eta$ are bounded above 
by some other sufficiently small constant.

To obtain a proper learner, we proceed to directly estimate the defining vector $\vec v^\ast$ of the target halfspace $h^\ast (\x) = \sgn(\vec{v}^{\ast} \cdot \x)$, where we 
assume without loss of generality that $\vec v^\ast$ is a unit vector.
The following simple fact relating the degree-$1$ \emph{Chow-parameters} 
of a homogeneous halfspace and its defining vector will be useful for us.
\begin{fact}[see, e.g.,~Lemma 4.3 of~\cite{DKS18a}] \label{lem:unbiased-chow}
Let $\vec v$ be a unit vector and $h(\vec x) = \sgn(\vec v \cdot \vec x)$ 
be the corresponding halfspace.
If $\vec x$ is drawn from $\normal(\vec 0, \vec I)$, 
then  we have that 
$\E_{\vec x \sim \normal(\vec 0, \vec I)} \lp[ h(\vec x)  \vec x\rp] = \sqrt{2/\pi} ~ \vec v$. 
\end{fact}

To apply \Cref{lem:unbiased-chow} in our context, we need to overcome two hurdles: 
(i) the $\vec x$ marginal of $D$ is not necessarily the standard Gaussian, and 
(ii) the labels are not always consistent with $h^\ast(\vec x)$.
The second issue can be circumvented by following the approach of \cite{DKS18a}.
In particular, if the $\vec x$ marginal of $D$ is indeed Gaussian, 
we can just treat $D$ as a corrupted version of $(\vec x, h^\ast(\vec x))$, 
where $\vec x \sim \normal(\vec 0, \vec I)$ 
and estimate the Chow parameters robustly.

To deal with the first issue, we borrow tools from \cite{Gollakota2022}. 
At a high level, we certify that the low-degree moments of $D_{\x}$
--- the $\vec x$ marginal of $D$ --- approximately match 
the corresponding moments of $\normal(\vec 0, \vec I)$ 
before estimating the Chow parameters. 
To establish the correctness of our algorithm, 
we show that, for any distribution $B$ that passes the moment test, 
the Chow parameters of a halfspace under $B$ 
will still be close to its defining vector.
Formally, we prove the following lemma: 

\begin{lemma}[From Moment-Matching to Chow Distance]\label{lem:moment-robust-chow}
Fix $\eta > 0$.
Let $k=C\log(1/\eta)/\eta^2$ and $\Delta =  \frac{1}{k  d^k }  \lp(  \frac{1}{ C  \sqrt{k}  } \rp)^{k+1}$, where $C>0$ is a sufficiently large absolute constant.
Let $B$ be a distribution whose moments up to degree $k$ match with those 
of $\normal(\vec 0, \vec I)$ up to additive error $\Delta$. 
Let $h(\vec x) = \sgn(\vec v \cdot \vec x)$ be a halfspace.
Then we have that 
$$
\snorm{2}{\E_{\vec x \sim B} \lp[ h(\vec x)  \vec x \rp]
- \new{\sqrt{\frac{2}{\pi}}} ~ \vec v} \leq O(\sqrt{\eta}) \;.
$$
\end{lemma}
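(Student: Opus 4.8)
The plan is to reduce the vector bound to a family of scalar comparisons: for an arbitrary fixed unit vector $\theta\in\R^d$, bound $\left|\E_{\x\sim B}[h(\x)(\theta\cdot\x)]-\sqrt{2/\pi}\,(\vec v\cdot\theta)\right|$ uniformly in $\theta$, and then take the supremum over unit $\theta$ to recover $\|\E_B[h(\x)\x]-\sqrt{2/\pi}\,\vec v\|_2$. By \Cref{lem:unbiased-chow}, $\sqrt{2/\pi}\,(\vec v\cdot\theta)=\E_{\x\sim\normal(\vec 0,\vec I)}[h(\x)(\theta\cdot\x)]$, so the task is to compare $\E_B$ and $\E_\normal$ of the scalar function $F(\x)=\sgn(\vec v\cdot\x)(\theta\cdot\x)$. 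The obstacle is that $F$ is \emph{unbounded}, so one cannot simply approximate it by a bounded low-degree polynomial and quote moment matching; and one cannot use an $L_2(\normal)$-approximation of $\sgn$, since $B$ could concentrate exactly where that approximation is bad. I resolve both points by (i) writing $\theta=(\vec v\cdot\theta)\,\vec v+s\,\vec u$ with $\vec u\perp\vec v$ a unit vector and $s^2=1-(\vec v\cdot\theta)^2$, so that $F(\x)=(\vec v\cdot\theta)\,|\vec v\cdot\x|+s\,\sgn(\vec v\cdot\x)(\vec u\cdot\x)$, and (ii) using \emph{pointwise sandwiching} polynomials of degree at most $k/2$ (so that products and squares of two of them stay within degree $k$), valid on an interval $[-T,T]$ with $T=\Theta(\sqrt k)$ and uniformly $O(1)$-bounded there.

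For the term $|\vec v\cdot\x|$, I would use degree-$(k/2)$ polynomials $p_-^{\mathrm{abs}}\le |t|\le p_+^{\mathrm{abs}}$ on $[-T,T]$ with Gaussian sandwiching gap $\E_{t\sim\normal(0,1)}[p_+^{\mathrm{abs}}(t)-p_-^{\mathrm{abs}}(t)]=O(\eta)$ (the absolute value is Lipschitz, so even $O(\eta)$ is comfortable). Truncating the integral at $\{|\vec v\cdot\x|\le T\}$ costs an error handled by Cauchy--Schwarz together with $\Pr_B[|\vec v\cdot\x|>T]\le \E_B[(\vec v\cdot\x)^{m}]/T^{m}$ for an even $m\le k$: expanding $(\vec v\cdot\x)^m$ into monomials and using the moment-matching hypothesis, $\E_B[(\vec v\cdot\x)^m]$ differs from its Gaussian value by at most $d^{m/2}\Delta\ll 1$, so this tail is $2^{-\Omega(k)}$ for $T=\Theta(\sqrt k)$ with a large enough constant. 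On $[-T,T]$ the sandwich gives $\E_B[p_-^{\mathrm{abs}}(\vec v\cdot\x)\Ind\{|\vec v\cdot\x|\le T\}]\le \E_B[|\vec v\cdot\x|\Ind\{|\vec v\cdot\x|\le T\}]\le \E_B[p_+^{\mathrm{abs}}(\vec v\cdot\x)\Ind\{|\vec v\cdot\x|\le T\}]$, and each outer quantity equals its un-truncated version up to $O(1)\cdot 2^{-\Omega(k)}$, while $\E_B[p_\pm^{\mathrm{abs}}(\vec v\cdot\x)]$ agrees with $\E_\normal[p_\pm^{\mathrm{abs}}(\vec v\cdot\x)]=\sqrt{2/\pi}\pm O(\eta)$ up to $(\text{coefficient }\ell_1\text{-mass})\cdot d^{k}\cdot\Delta$. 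Since $p_\pm^{\mathrm{abs}}$ have degree $\le k/2$, standard Hermite/Chebyshev coefficient bounds give coefficient mass $\le (C\sqrt k)^{k+1}$, and this is precisely the quantity against which $\Delta$ was calibrated, so the error is $O(1/k)$. Hence this term contributes $O(\eta)$.

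For the term $s\,\sgn(\vec v\cdot\x)(\vec u\cdot\x)$, the Gaussian value is $0$ (independence of $\vec v\cdot\x$ and $\vec u\cdot\x$ under $\normal$, with $\E_\normal[\vec u\cdot\x]=0$), so the goal is $|\E_B[\sgn(\vec v\cdot\x)(\vec u\cdot\x)]|=O(\sqrt\eta)$. Take degree-$(k/2)$ sandwiching polynomials $p_-\le\sgn\le p_+$ on $[-T,T]$, uniformly $O(1)$-bounded there, with Gaussian gap $\E_\normal[p_+-p_-]=O(\eta)$ (the Gaussian sandwiching gap for $\sgn$ at degree $m$ is $\widetilde O(1/\sqrt m)$, and $m=\Theta(k)=\Theta(\log(1/\eta)/\eta^2)$ makes it $O(\eta)$ — this is where the $\log(1/\eta)$ factor in $k$ is used). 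Setting $q=p_-$, we have $|\sgn(t)-q(t)|\le (p_+-p_-)(t)$ on $[-T,T]$, and
\[
\E_B[\sgn(\vec v\cdot\x)(\vec u\cdot\x)]=\E_B[q(\vec v\cdot\x)(\vec u\cdot\x)]+\E_B[(\sgn(\vec v\cdot\x)-q(\vec v\cdot\x))(\vec u\cdot\x)].
\]
The first summand has degree $\le k$ and Gaussian value $0$, so by moment matching it is at most $(C\sqrt k)^{k+1}d^k\Delta=O(1/k)$. For the second, restrict to $\{|\vec v\cdot\x|\le T\}$ (the complementary tail handled as before by Cauchy--Schwarz and the high-moment bound); there, pointwise, $|\sgn(\vec v\cdot\x)-q(\vec v\cdot\x)|\,|\vec u\cdot\x|\le (p_+-p_-)(\vec v\cdot\x)\,|\vec u\cdot\x|$, so by Cauchy--Schwarz this is at most $\sqrt{\E_B[(p_+-p_-)^2(\vec v\cdot\x)]}\cdot\sqrt{\E_B[(\vec u\cdot\x)^2]}$. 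Now $\E_B[(\vec u\cdot\x)^2]\le 1+d\Delta\le 2$; and since $(p_+-p_-)^2$ has degree $\le k$, $\E_B[(p_+-p_-)^2(\vec v\cdot\x)]$ equals its Gaussian value up to a negligible moment-matching error, while, because $p_+-p_-\ge 0$ is $O(1)$-bounded on $[-T,T]$ and exponentially unlikely outside it, $\E_\normal[(p_+-p_-)^2(\vec v\cdot\x)]\le O(1)\cdot\E_\normal[(p_+-p_-)(\vec v\cdot\x)]+2^{-\Omega(k)}=O(\eta)$. Thus this term contributes $\sqrt{O(\eta)}\cdot\sqrt 2=O(\sqrt\eta)$. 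Combining, and using $|\vec v\cdot\theta|,|s|\le 1$ with implied constants independent of $\theta$ and $d$, gives $O(\eta)+O(\sqrt\eta)=O(\sqrt\eta)$; taking the supremum over unit $\theta$ yields the lemma.

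The crux — and the main obstacle — is the interplay between the unbounded linear factor $\theta\cdot\x$ and the need for \emph{sign-pointwise} (not $L_2$) control: this is exactly what forces the $\theta=(\vec v\cdot\theta)\vec v+s\vec u$ decomposition and the Cauchy--Schwarz step against $\E_B[(\vec u\cdot\x)^2]$, and it is also the reason the final error degrades to $O(\sqrt\eta)$ rather than $O(\eta)$. The remaining work is purely bookkeeping: checking that every truncation-tail term and every ``coefficient magnitude $\times\,\Delta$'' term is negligible, which is where the precise value of $\Delta$ (chosen to beat $k\,d^k(C\sqrt k)^{k+1}$) and the degree-$k$ moment-matching tail bounds are used.
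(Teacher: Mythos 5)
Your proposal is correct in outline, but it takes a genuinely different route from the paper. The paper reduces to comparing $\E_B$ and $\E_{\normal}$ of $h(\x)(\vec u\cdot\x)$ for an arbitrary unit $\vec u$, then uses the integral identity $a=\tfrac12\int_0^\infty(\sgn(a-t)+\sgn(a+t))\,\d t$ (\Cref{fact:sign-int}) to turn the unbounded linear factor into an integral of products of \emph{two} halfspaces; for $t\le 1/\sqrt\eta$ it invokes the moment-matching fooling result for functions of halfspaces (\Cref{lem:sandwich-bound}, Theorem 5.6 of \cite{Gollakota2022}) as a black box, and for $t\ge 1/\sqrt\eta$ it uses a Chebyshev tail bound under the matched second moments (\Cref{lem:chow-distance-tail}). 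You instead decompose the test direction as $\theta=(\vec v\cdot\theta)\vec v+s\vec u$ with $\vec u\perp\vec v$, handle the parallel part by sandwiching $|\vec v\cdot\x|$ with low-degree polynomials, and handle the perpendicular part by replacing $\sgn$ with a polynomial approximator $q$ and applying Cauchy--Schwarz against $\E_B[(\vec u\cdot\x)^2]\le 2$ (which, like the paper's tail lemma, rests on degree-$2$ moment matching); in both arguments the $O(\sqrt\eta)$ loss comes from exactly this second-moment/Chebyshev step. What each buys: the paper's proof is shorter because the sandwiching-polynomial machinery and coefficient bookkeeping are hidden inside the cited fact, while yours is more self-contained in structure but re-derives that machinery, so the burden of detail is larger --- you must actually supply the sandwiching polynomials for $\sgn$ and $|t|$ with the claimed degree/gap trade-off, justify their $O(1)$ bound on $[-T,T]$ together with a Chebyshev-type growth bound outside $[-T,T]$ (needed for your claim that the Gaussian tail contribution is $2^{-\Omega(k)}$ and for the truncation errors under $B$), and watch the degree accounting: as written, steps like $\E_B[(p_+-p_-)^2(\vec v\cdot\x)(\vec u\cdot\x)^2]$ or $q^2\cdot(\vec u\cdot\x)^2$ exceed degree $k$ unless you take the approximators of degree at most $k/2-1$, a harmless but necessary adjustment. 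With those standard ingredients filled in (they are exactly what underlies the fact the paper cites), your argument goes through and yields the same $O(\sqrt\eta)$ bound.
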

\begin{proof}
It suffices to show that for any unit vector $\vec u \in \R^d$, 
the following holds: 
\begin{align*}
\abs{\E_{\vec x \sim B} \lp[ h(\vec x)  \vec x \cdot \vec u \rp] 
- \E_{\vec x \sim\normal(\vec 0, \vec I)} \lp[ h(\vec x)  \vec x \cdot \vec u \rp] }  \leq O(\sqrt{\eta}) \;.
\end{align*}
The following fact expresses 
a real number $a$ as an integral of the $\sgn$ function.
\begin{fact} \label{fact:sign-int}
For any $a \in \R$, it holds
\new{
$ a = \frac{1}{2} \int_{0}^\infty (\sgn(a - t) + \sgn(a + t)) \d t  $.
}
\end{fact}
We apply \Cref{fact:sign-int} to the term $\vec u\cdot \x$, which gives
\begin{align*}
\abs{\E_{\vec x \sim B} \lp[ h(\vec x)  \vec x \cdot \vec u \rp]
- \E_{\vec x \sim\normal(\vec 0, \vec I)} \lp[ h(\vec x)  \vec x \cdot \vec u \rp]} 
&= 
\frac{1}{2}
\bigg|  \E_{\vec x \sim B} \lp[ h(\vec x)  \int_{t \geq 0} \lp( \sgn( \vec u \cdot \vec x - t) + \sgn( \vec u \cdot \vec x + t) \rp) \d t\rp] \\
&- \E_{\vec x \sim \normal(\vec 0, \vec I)} \lp[ h(\vec x)  \int_{t \geq 0} \lp( \sgn( \vec u \cdot \vec x - t) + \sgn( \vec u \cdot \vec x + t) \rp) \d t\rp] \bigg| \\
&= 
\frac{1}{2}
\bigg | \int_{t \geq 0}\bigg( \E_{\vec x \sim B} \lp[ h(\vec x)  \lp( \sgn( \vec u \cdot \vec x - t) + \sgn( \vec u \cdot \vec x + t) \rp) \rp] \\
&- \E_{\vec x \sim \normal(\vec 0, \vec I)} \lp[ h(\vec x)  \lp( \sgn( \vec u \cdot \vec x - t) + \sgn( \vec u \cdot \vec x + t) \rp) \rp] \bigg)\d t
\bigg | \,,
\end{align*}
where in the last line we switch the order of the integral of $t$ and $\vec x$ by Fubini's theorem.
We then split the above integral over $t$ into two parts based on the magnitude of $t$ 
($t > 1/\sqrt{\eta}$ versus $0\leq t \leq 1/\sqrt{\eta}$) 
and apply the triangle inequality:
\begin{align} 
&\abs{\E_{\vec x \sim B} \lp[ h(\vec x)  \vec x \cdot \vec u \rp]
- \E_{\vec x \sim\normal(\vec 0, \vec I)} \lp[ h(\vec x)  \vec x \cdot \vec u \rp]} \nonumber \\
& \leq 
\frac{1}{2}
\abs{ \int_{ 0\leq t \leq 1/\sqrt{\eta} } \bigg(\E_{\vec x \sim B} \lp[ h(\vec x)  \sgn( \vec u \cdot \vec x - t) \rp]
- \E_{\vec x \sim \normal(\vec 0, \vec I)} \lp[ h(\vec x)  \sgn( \vec u \cdot \vec x - t) \rp] \bigg)\d t} \nonumber \\
&+\frac{1}{2}
\abs{ \int_{ 0\leq t \leq 1/\sqrt{\eta} }\bigg( \E_{\vec x \sim B} \lp[ h(\vec x)  \sgn( \vec u \cdot \vec x + t) \rp]
- \E_{\vec x \sim \normal(\vec 0, \vec I)} \lp[ h(\vec x)  \sgn( \vec u \cdot \vec x + t) \rp]\bigg) \d t} \nonumber \\
&+
\frac{1}{2}
\bigg| \int_{ t \geq 1/\sqrt{\eta} }\bigg( \E_{\vec x \sim B} \lp[ h(\vec x)  
\lp(\sgn( \vec u \cdot \vec x - t) + \sgn( \vec u \cdot \vec x + t)\rp)
\rp] \nonumber \\
&- \E_{\vec x \sim \normal(\vec 0, \vec I)} \lp[ h(\vec x)  \lp(\sgn( \vec u \cdot \vec x - t) + \sgn( \vec u \cdot \vec x + t)\rp) \rp] \bigg)
\d t
\bigg|
\label{eq:split} \;.
\end{align}
We start by bounding the integral for $t \geq 1/\sqrt{\eta}$. 
\new{
\begin{lemma}[Chow-Distance Tail]\label{lem:chow-distance-tail}
Let $Q$ be distribution over $\R^d$ with 
$\E_{\x \sim Q}[\x \x^\top] \preccurlyeq 2 \vec I$.
Moreover, let $g(\x):\R^d \mapsto \R$ be a bounded function, i.e.,
$|g(\x)| \leq 1$ for all $\x \in \R^d$.
It holds
\begin{align*} 
&\abs{\int_{ t \geq 1/\sqrt{\eta} } \E_{\vec x \sim Q} \lp[ g(\vec x)  
\lp( \sgn( \vec u \cdot \vec x - t) + \sgn( \vec u \cdot \vec x + t) \rp)
\rp]\d t} \leq O(\sqrt{\eta})
\, . \end{align*}
\end{lemma}
}
\begin{proof}
We split the expectation into two parts based on the relative sizes 
of $\abs{ \vec u \cdot \vec x }$ and $t$. 
Specifically, we can write: 
\new{
\begin{align}
& \abs{\int_{ t \geq 1/\sqrt{\eta} } \E_{\vec x \sim Q} \lp[ g(\vec x)  \lp( 
\sgn( \vec u \cdot \vec x - t)
+
\sgn( \vec u \cdot \vec x + t)
\rp) \rp] \d t}     \nonumber\\
&\leq
\abs{
\int_{ t \geq 1/\sqrt{\eta} } \E_{\vec x \sim Q} \lp[ g(\vec x)  
\lp( 
\sgn( \vec u \cdot \vec x - t)
+
\sgn( \vec u \cdot \vec x + t)
\rp)
 \mathbbm 1 \{ \abs{\vec u \cdot \vec x} \geq t \}
\rp]\d t} \nonumber \\
&+
\abs{
\int_{ t \geq 1/\sqrt{\eta} } \E_{\vec x \sim Q} \lp[ g(\vec x)  
\lp( 
\sgn( \vec u \cdot \vec x - t)
+
\sgn( \vec u \cdot \vec x + t)
\rp)
  \mathbbm 1 \{ \abs{\vec u \cdot \vec x} \leq
t\}
\rp]\d t
}. \label{eq:expectation-split}
\end{align}
}
For the second term in \Cref{eq:expectation-split}, we rely on the following observation:
when $\abs{\vec u \cdot \vec x} \leq t$, 
the quantities $\vec u \cdot \vec x - t$ and $\vec u \cdot \vec x + t$ have opposite signs. 
Hence, we conclude the integrand is $0$ everywhere 
and therefore the second term is also $0$.
For the first term, we have
\new{
\begin{align*}
& \abs{
\int_{ t \geq 1/\sqrt{\eta} } \E_{\vec x \sim Q} \lp[ g(\vec x)  
\lp( \sgn( \vec u \cdot \vec x - t) + \sgn( \vec u \cdot \vec x + t) \rp)
 \mathbbm 1 \{ \abs{\vec u \cdot \vec x} \geq t \}
\rp]} \\
&\leq 
\int_{ t \geq 1/\sqrt{\eta} } \E_{\vec x \sim Q} \lp[ \bigg|g(\vec x)  
\lp( \sgn( \vec u \cdot \vec x - t) + \sgn( \vec u \cdot \vec x + t) \rp)
 \mathbbm 1 \{ \abs{\vec u \cdot \vec x} \geq t \} \bigg|
\rp] \\
&\leq
\int_{ t \geq 1/\sqrt{\eta} } 
\E_{\vec x \sim Q} \lp[ 2  \mathbbm 1 \{ \abs{\vec u \cdot \vec x} \geq t \}
\rp] 
\leq  4  \int_{ t \geq 1/\sqrt{\eta} } 
 \frac{1}{ t^2 } 
\leq O(\sqrt{\eta}) \, ,
\end{align*}
}
where the first inequality follows from the triangle inequality, \new{
the second inequality uses the fact that the $\sgn(\cdot)$ function is at most $1$}
and the \new{third} inequality follows from Chebyshev's inequality using 
the fact that the $\E[\x\x^\top]\preccurlyeq 2\vec I$.
Combining our analysis for the two terms in \Cref{eq:expectation-split}, 
we can then conclude the proof of \Cref{lem:chow-distance-tail}.
\end{proof}

Using the triangle inequality and applying \Cref{lem:chow-distance-tail} on the distributions $B$ and $\normal(\vec 0,\vec I)$, we have that  
\new{
\begin{align}
\frac{1}{2}
&\bigg| \int_{ t \geq 1/\sqrt{\eta} } \bigg(\E_{\vec x \sim B} \lp[ h(\vec x)  
\lp(\sgn( \vec u \cdot \vec x - t) + \sgn( \vec u \cdot \vec x + t)\rp)
\rp] \nonumber \nonumber \\
&- \E_{\vec x \sim \normal(\vec 0, \vec I)} \lp[ h(\vec x)  \lp(\sgn( \vec u \cdot \vec x - t) + \sgn( \vec u \cdot \vec x + t)\rp) \rp] 
\bigg)\d t
\bigg|
\leq O(\sqrt{\eta}) \;. \label{eq:large-t-bound}
\end{align}
}
We then turn our attention to the terms
\begin{equation}
\abs{ \int_{ 0\leq t \leq 1/\sqrt{\eta} }\left( \E_{\vec x \sim B} \lp[ h(\vec x)  \sgn( \vec u \cdot \vec x - t)\rp]
- \E_{\vec x \sim \normal(\vec 0, \vec I)} \lp[ h(\vec x)  \sgn( \vec u \cdot \vec x - t)  \rp]\right) \d t} \;.\label{eq:second-term} 
\end{equation}
\begin{equation}
\abs{ \int_{ 0\leq t \leq 1/\sqrt{\eta} }\left( \E_{\vec x \sim B} \lp[ h(\vec x)  \sgn( \vec u \cdot \vec x + t)\rp]
- \E_{\vec x \sim \normal(\vec 0, \vec I)} \lp[ h(\vec x)  \sgn( \vec u \cdot \vec x + t)  \rp]\right) \d t} \;.\label{eq:third-term} 
\end{equation}
To bound \Cref{eq:second-term,eq:third-term}, we need the following fact from \cite{Gollakota2022}.
\begin{fact}[Theorem 5.6 of \cite{Gollakota2022}]
\label{lem:sandwich-bound}
Let $h: \R^d \mapsto \{\pm 1\}$ be a function of $p$ halfspaces, i.e., $h(\vec x) = g\lp( h_1(\vec x), \cdots, h_p(\vec x) \rp)$ where $h_i$ are halfspaces and $g: \{\pm 1\}^p \mapsto \{\pm 1\} $.
For any $k \in \mathbb N$, 
let $ \Delta = \frac{  \sqrt{p} }{ 2k }  \frac{1}{  d^k }  \lp(  \frac{1}{ C'  \sqrt{k}  } \rp)^{k+1}$ for some sufficiently large absolute constant $C'>0$.
Then, for any distribution $B$ whose moments up to order $k$ match those of $\normal(\vec 0, \vec I)$ up to $\Delta$, we have
$$
\lp|  \E_{\vec x \sim \normal(\vec 0, \vec I)} \lp[ h(\vec x)  \rp] - \E_{\vec x \sim B} \lp[ h(\vec x) \rp] \rp|
\leq \frac{1}{\sqrt{k}} \sqrt{p}  \lp( C  \log \lp( \sqrt{p k} \rp) \rp)^{2p} \;.
$$
 for some constant $C > 0$.
\end{fact}
For a fixed $t$, note that $h(\vec x) \sgn(\vec u \cdot \vec x - t)$ is a function of two halfspaces. Moreover, from the assumptions of \Cref{lem:moment-robust-chow}, the distributions $B$ and $\normal(\vec 0,\vec I)$ match $k=C\log(1/\eta)/\eta^2$ moments up to error $\Delta =  \frac{1}{k  d^k }  \lp(  \frac{1}{ C  \sqrt{k}  } \rp)^{k+1}$, where $C>0$ is a sufficiently large absolute constant. Therefore, applying \Cref{lem:sandwich-bound} and the 
triangle inequality gives 
\begin{align}
&\frac{1}{2}
\abs{ \int_{ 0\leq t \leq 1/\sqrt{\eta} }\bigg( \E_{\vec x \sim B} \lp[ h(\vec x)  \sgn( \vec u \cdot \vec x - t) \rp]
- \E_{\vec x \sim \normal(\vec 0, \vec I)} \lp[ h(\vec x)  \sgn( \vec u \cdot \vec x - t) \rp]\bigg) \d t} \nonumber \\
&\leq O(1)  \int_{ 0\leq t \leq 1/\sqrt{\eta} } \eta\d t
= O(\sqrt{\eta}) \label{eq:small-t-bound} \;.
\end{align}
Similarly, we can show that \Cref{eq:third-term} is bounded by $O(\sqrt{\eta})$.
Substituting the bounds from \Cref{eq:large-t-bound,eq:small-t-bound} into \Cref{eq:split} then gives
$$
\abs{\E_{\vec x \sim B} \lp[ h(\vec x)  \vec x \cdot \vec u \rp]
- \E_{\vec x \sim\normal(\vec 0, \vec I)} \lp[ h(\vec x)  \vec x \cdot \vec u \rp]} \leq O(\sqrt{\eta}).
$$
Since $\vec u$ is chosen as an arbitrary unit vector, this implies that
$$
\snorm{2}{ \E_{\vec x \sim B} \lp[ h(\vec x)  \vec x  \rp]
- \E_{\vec x \sim\normal(\vec 0, \vec I)} \lp[ h(\vec x)  \vec x \rp] }\leq O(\sqrt{\eta}).
$$
Combining this with \Cref{lem:unbiased-chow} concludes the proof of \Cref{lem:moment-robust-chow}.
\end{proof}

With \Cref{lem:moment-robust-chow} in hand, we know it suffices 
to estimate the Chow parameters of $h^\ast$ with respect to $D_{\vec x}$. 
This would then give us a good approximation to $\vec v^\ast$ conditioned 
on $D_{\vec x}$ indeed having its low-degree moments approximately 
match those of $\normal(\vec 0, \vec I)$.
We use the following algorithm, which estimates the Chow parameters 
\emph{robustly} under adversarial label noise.
\begin{lemma}\label{lem:robust-chow}
Let $G$ be a distribution over $\R^d\times \{ \pm 1\}$ such that 
$\E_{\vec x \sim G_\x}[ \vec x \vec x^\top ] \preccurlyeq 2 \vec I$.
Let $\vec v\in \R^d$ be a unit vector such that 
$\vec v=\argmin_{\w\in \R^d}\pr_{(\x,y)\sim G}[\sign(\vec w\cdot\x)\neq y]$
and assume that
$ \pr_{(\x,y)\sim G}[\sign(\vec v\cdot\x)\neq y] \leq \eps. $
Then there exists an algorithm that takes $N=\poly(d,1/\eps)$ samples, 
runs in time $\poly(N)$, and outputs a vector $\vec w$ such that
$$
\snorm{2}{\E_{\vec  x \sim G_\x} \lp[   \sgn(\vec v \cdot \vec x) \vec x  \rp] - \vec w}
\leq O(\sqrt{\eps}) \;.
$$
\end{lemma}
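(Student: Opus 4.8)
The plan is to observe that the noise-robust Chow estimate we need is, up to $O(\sqrt{\eps})$ error, just the empirical average of $y\,\x$; so the algorithm outputs $\wh{\vec c} = \frac{1}{N}\sum_{i=1}^{N} y^{(i)} \x^{(i)}$ (possibly replaced by a heavy-tail-robust variant; see below). I would split the analysis into a deterministic ``bias'' bound comparing $\E_{(\x,y)\sim G}[y\x]$ to the target $\E_{\x\sim G_\x}[\sgn(\vec v\cdot\x)\x]$, and a ``variance'' bound on the sampling error of the empirical average.

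For the bias bound, write $h^\ast(\x) = \sgn(\vec v\cdot\x)$ and note $\E[y\x] - \E[h^\ast(\x)\x] = \E[(y - h^\ast(\x))\x]$. The factor $y - h^\ast(\x)$ is supported on the error region $E = \{(\x,y): y \neq h^\ast(\x)\}$, which has $G$-mass at most $\eps$ by hypothesis, and $|y - h^\ast(\x)| = 2$ there. Hence for any unit vector $\vec u$, Cauchy-Schwarz gives $|\E[(y - h^\ast(\x))(\vec u\cdot\x)]| \leq 2\,\E[\1\{E\}\,|\vec u\cdot\x|] \leq 2\sqrt{\Pr[E]}\,\sqrt{\E[(\vec u\cdot\x)^2]} \leq 2\sqrt{2\eps}$, where we used $\E[(\vec u\cdot\x)^2] = \vec u^\top \E[\x\x^\top]\vec u \leq 2$. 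Taking a supremum over $\vec u$ yields $\snorm{2}{\E[y\x] - \E[h^\ast(\x)\x]} = O(\sqrt{\eps})$.

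For the variance bound, each coordinate satisfies $\Var(y\x_j) \leq \E[\x_j^2]$, so $\sum_j \Var(y\x_j) \leq \tr(\E[\x\x^\top]) \leq 2d$; thus $\E\snorm{2}{\wh{\vec c} - \E[y\x]}^2 \leq 2d/N$, and choosing $N = \Theta(d/\eps)$ together with Markov's inequality gives $\snorm{2}{\wh{\vec c} - \E[y\x]} = O(\sqrt{\eps})$ with, say, probability $9/10$. The triangle inequality with the bias bound then proves the lemma. To get failure probability $\tau$, I would instead feed the samples through a heavy-tail-robust mean estimator (e.g., a geometric-median-of-batch-means), which for vectors with covariance $\preccurlyeq 2\vec I$ achieves $\ell_2$-error $O(\sqrt{d\log(1/\tau)/N})$ with probability $1-\tau$, still using only $N = \poly(d,1/\eps)\log(1/\tau)$ samples.

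The main obstacle — and it is a mild one — is that $\x$ is only assumed to have bounded second moments, so sub-Gaussian concentration is unavailable and the $\ell_2$ sampling-error bound must come either from the crude second-moment (Chebyshev) estimate above or from a robust estimator. I also note that optimality of $\vec v$ is never used beyond the disagreement bound $\Pr_{(\x,y)\sim G}[\sgn(\vec v\cdot\x)\neq y] \leq \eps$; the argument goes through for any $\vec v$ whose halfspace agrees with the labels on all but an $\eps$-fraction of the mass.
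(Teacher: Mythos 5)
Your proposal is correct and follows essentially the same route as the paper: the identical Cauchy--Schwarz argument bounds $\snorm{2}{\E[y\x]-\E[\sgn(\vec v\cdot\x)\x]}$ by $O(\sqrt{\eps})$ using only the bounded second moment and the $\eps$-disagreement bound, and the sampling step is a second-moment (Chebyshev/Markov) mean estimate boosted by a median technique, which is what the paper does coordinatewise with median-of-means. The only differences are cosmetic (the paper targets per-coordinate accuracy $\eps/\sqrt d$ to get $O(\eps)$ statistical error, while you settle for $O(\sqrt\eps)$ via the trace bound, which suffices for the stated conclusion).
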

\begin{proof}
\new{We first show that $\snorm{2}{\E_{\vec  x \sim G_\x} \lp[   \sgn(\vec v \cdot \vec x) \vec x  \rp] - \E_{(\vec  x,y) \sim G} \lp[   y \vec x  \rp] }
\leq O(\sqrt{\eps}) $.  For any unit vector $\vec u$, we have that 
\begin{align*}
    \E_{\vec  x \sim G_\x} \lp[   \sgn(\vec v \cdot \vec x) \vec u\cdot \vec x  \rp] - \E_{(\vec  x,y) \sim G} \lp[   y \vec u\cdot \vec x  \rp]&=\E_{(\vec  x,y) \sim G} \lp[   (\sgn(\vec v \cdot \vec x) -y) \vec u\cdot \vec x  \rp]
    \\&\leq \sqrt{\E_{(\vec  x,y) \sim G} \lp[   (\sgn(\vec v \cdot \vec x) -y)^2\rp]\E_{\vec  x \sim G_\x} \lp[ (\vec u\cdot \vec x)^2  \rp]}
    \\&\leq 4\sqrt{\eps}\;,
\end{align*}
where we used the \CS inequality and the fact that $ \pr_{(\x,y)\sim G}[\sign(\vec v\cdot\x)\neq y] \leq \eps$. Therefore, we have that $\snorm{2}{\E_{\vec  x \sim G_\x} \lp[   \sgn(\vec v \cdot \vec x) \vec x  \rp] - \E_{(\vec  x,y) \sim G} \lp[   y \vec x  \rp] }
\leq 4\sqrt{\eps} $.
Let $(\x^{(1)},y^{(1)}),\ldots,(\x^{(N_1)},y^{(N_1)})$ be samples drawn from $D$, where $N_1=O(d/\eps^2)$. Then, let $\widetilde{\vec w_i}=(1/N_1)\sum_{i=1}^{N_1}y^{(i)}\x^{(i)}\cdot \vec e_i$. 
From, Markov's inequality, we have that $\pr[|\widetilde{\vec w_i}-\E_{(\x,y)\sim D}[y\x]|\geq \eps/\sqrt d]\leq 4 d/(N_1\eps^2)\leq 1/4$. Therefore, using the standard median technique, we can find a $\vec w_i^{\mathrm{median}}$, so that $\pr[|\vec w_i^{\mathrm{median}}-\E_{(\x,y)\sim D}[y\x]|\geq \eps/\sqrt d]\leq \tau/d$, using $N_2=O(N_1\log(d/\tau))$ samples. Let $\vec w=(\vec w_1^{\mathrm{median}},\ldots,\vec w_d^{\mathrm{median}})$, then we have  that $\snorm{2}{\E_{(\vec  x,y) \sim G} \lp[   y \vec x  \rp] - \vec w}
\leq O(\eps)$ with probability at least $1-\tau$. Then, using the triangle inequality, we have that $\snorm{2}{\E_{\vec  x \sim G_\x} \lp[   \sgn(\vec v \cdot \vec x) \vec x  \rp] - \vec w}
\leq O(\sqrt{\eps})$, which concludes the proof of \Cref{lem:robust-chow}.}
\end{proof}
We are ready to present the algorithm and conclude the proof of \Cref{lem:constant-learn}.

\begin{proof}[Proof of \Cref{lem:constant-learn}]
Let $k, \Delta, N$ be defined as in \Cref{alg:proper-testable-learner}.
If $D_{\vec x}$ is $\normal(\vec 0, \vec I)$, the moments up to degree $k$ 
of the $\vec x$-marginal of the empirical distribution $\widehat D_N$ 
(obtained after drawing $N$ \iid samples from $D$) 
are close to those of $\normal(\vec 0, \vec I)$ 
up to additive error $\Delta$ with probability at least $1 - \tau/10$.

If \Cref{alg:proper-testable-learner} did not terminate on Line~\ref{alg:moment-test}, 
we then have that the moments up to degree $k$ of the $\vec x$-marginal of $\widehat D_N$ 
are close to those of $\normal(\vec 0, \vec I)$ up to additive error $\Delta$ 
with probability at least $1 - \tau/10$.
Let $(\widehat D_{N})_\x$ be the $\vec x$-marginal of $\widehat D_{N}$.
Then, applying \Cref{lem:moment-robust-chow} with 
$B = (\widehat D_{N})_\x$ and $h(\x) = \sign(\vec v^\ast\cdot\x)$, we get that 
\begin{align} \label{eq:moment-error}
\snorm{2}{\E_{ \vec x  \sim (\widehat D_{N})_\x } \lp[  \sign(\vec v^\ast\cdot \x)\x\rp] - \sqrt{2/\pi}\vec v^\ast} \leq O(\sqrt{\eta}).
\end{align}
By our assumption, the error of $ \sign(\vec v^\ast\cdot\x)$ under $D$ is at most $\OPT$.
Hence, the error of $\sign(\vec v^\ast\cdot\x)$ under $\widehat D_N$ is at most $\OPT + \eta$ with probability at least $1-\tau/10$.
\new{Assuming that this holds,} by \Cref{lem:robust-chow}, with probability at least $1 - \tau/10$, the vector 
$\vec w$ computed on Line~\ref{alg1:chow} of \Cref{alg:proper-testable-learner} satisfies 
\begin{align} \label{eq:agnostic-error}
 \snorm{2}{\E_{ \vec x  \sim (\widehat D_{N})_\x } \lp[  \sign(\vec v^\ast\cdot \x)\x\rp] - \vec w} \leq O\lp(\sqrt{\OPT + \eta}\rp) \;.
\end{align}
Combining \Cref{eq:moment-error,eq:agnostic-error}, we get that
$$
\snorm{2}{\vec w - \sqrt{2/\pi}\vec v^\ast} \leq O\lp( \sqrt{\OPT + \eta} \rp) \;,
$$
as desired. 
\end{proof}

\begin{Ualgorithm}
	\centering
	\fbox{\parbox{6.3in}{
			{\bf Input:} Sample access to a distribution $D$ over labeled examples; certification range $\eta$; failure probability $\tau$.\\
   {\bf Output:} \new{Either reports that the $D_\x$ is not $\normal(\vec 0,\vec I)$; or returns a unit vector $\vec w\in \R^d$ such that $\|\vec v^\ast-\vec w\|_2\leq \Cagn\sqrt{\opt+\eta}$.}
\begin{enumerate}
\item Set $k=C\log(1/\eta)/\eta^2$ and $\Delta =  \frac{1}{k  d^k }  \lp(  \frac{1}{ C  \sqrt{k}  } \rp)^{k+1}$, where $C>0$ is a sufficiently large absolute constant.
\item Draw $N=d^{C  k \log k}  \log(1/\tau)$ samples from $D$ and construct the empirical distribution $\widehat{D}_{N}$.
\item  Certify that the moments of $\widehat D_N$  up to degree $k$ match with those of $\normal(\vec 0, \vec I)$ up to error $ \Delta $. \label{alg:moment-test}
\item If the above does not hold; report that $D_\x$ is not $\normal(\vec 0, \vec I)$ and terminate.
\item Use algorithm from \Cref{lem:robust-chow} on $\widehat {D}_N$ and obtain $\vec w$. Return $\vec w/\|\vec w\|_2$. \label{alg1:chow}
\end{enumerate}
}}
\vspace{0.2cm}
\caption{Proper Testable Learner} \label{alg:proper-testable-learner}
\end{Ualgorithm}

\section{Efficient Testable Learning of Halfspaces}
In this section, we give our tester-learner 
for homogeneous halfspaces under the Gaussian distribution, thereby proving
\Cref{thm:testable-learning-agnostic}.
Throughout this section, we will fix an optimal halfspace $h^\ast(\vec x) = \sgn(\vec v^\ast \cdot \vec x)$, i.e., a halfspace with optimal 0-1 error.

The structure of this section is as follows: In \Cref{ssec:distances}, we \new{present a tester which 
certifies that the probability of the disagreement region 
between two halfspaces whose defining vectors are close to each other 
is small under $D_{\x}$}.
In \Cref{ssec:final-alg}, we \new{present and analyze our localization step 
and combine it with the tester from \Cref{ssec:distances} to obtain our final algorithm}.

\subsection{From Parameter Distance to 0-1 Error} \label{ssec:distances}

For two homogeneous halfspaces $h_{\vec u}(\vec x) = \sgn(\vec u \cdot \vec x)$ and 
$h_{\vec v}(\vec x) = \sgn(\vec v \cdot \vec x )$, where $\vec{u}, \vec{v}$ are unit vectors, 
if $D_{\vec x}$ is the standard Gaussian, $\normal(\vec 0, \vec I)$, 
we can express the probability mass of their disagreement region as follows 
(see, e.g., Lemma 4.2 of \cite{DKS18a}): 
\begin{equation} \label{eq:angle-mass-relationship}
\Pr_{ \vec x \sim D_{\vec x} } \lp[ h_{\vec u}(\vec x) \neq h_{\vec v}(\vec x) \rp] \leq O \lp(  \snorm{2}{\vec u - \vec v} \rp) \;.
\end{equation}
Hence, learning homogeneous halfspaces under Gaussian marginals 
can often be reduced to approximately learning the defining vector 
of some optimal halfspace $h^\ast$.
This is no longer the case if $D_{\vec x}$ is an arbitrary distribution, 
\new{which may well happen in our regime}. 
We show in this section that it is still possible to ``certify'' 
whether some relationship similar to the one in \Cref{eq:angle-mass-relationship} holds.

\begin{Ualgorithm}
	\centering
	\fbox{\parbox{6in}{
			{\bf Input:} Sample access to a distribution $D_{\vec x}$ over $\R^d$; tolerance parameter $\eta>0$; unit vector $\vec v \in \R^d$;
failure probability $\tau \in (0,1)$.\\
{\bf Output:} \new{ Certifies that for all unit vectors $\vec w$ such that $ \snorm{2}{ \vec w - \vec v } \leq \eta$ 
it holds that $\Pr_{ \vec x \sim D_{\vec x} }[ \sgn(\vec v \cdot \vec x) \neq \sgn(\vec w \cdot  \vec x)] \leq C  \eta$, for some absolute constant $C>1$, or reports that $D_\x$ is not $\normal(\vec 0,\vec I)$.}
\begin{enumerate}
\item Set $B = \ceil{\sqrt{\log(1/\eta)} / \eta}$.
\item Let $\widetilde D$ be the empirical distribution obtained by drawing $\poly(d, 1/\eta)  \log(1/\tau)$ many samples from $D_{\vec x}$.
\item For integers $-B-1 \leq i \leq B$, define $E_i$ to be the event  that $\{\vec v \cdot \vec x \in [ i\eta , (i+1)  \eta ]\}$ and $E_{B+1}$ to be the event  that $\{|\vec v \cdot \vec x| \geq \sqrt{\log(1/\eta)}\}$.
\item \label{line:probability-check} Verify that
$$
\sum_{i=-B-1}^{B+1}
\abs{ \Pr_{ \normal(\vec 0, \vec I) } \lp[ E_i\rp] - \Pr_{ \widetilde D } \lp[ E_i\rp]}\
\leq \eta.
$$
\item Let $S_i$ be the distribution of $\widetilde D$ conditioned on $E_i$ and $S_i^{\perp}$ be $S_i$ projected into the subspace orthogonal to $\vec v$.
\item \label{line:moment-check} For each $i$, verify that $S_i^{\perp}$ 
has bounded covariance, i.e., check that
\(
\E_{\x \sim S_i^\perp}[\x \x^\top] \preccurlyeq 2 \vec I
\,.
\)

\end{enumerate}}}
\vspace{0.2cm}
\caption{Wedge-Bound} \label{alg:wedge-bound}
\end{Ualgorithm}

In particular, given a known vector $\vec v$, we want to make sure that for any other vector $\vec w$ that is close to $\vec v$, the mass of the disagreement region between the halfspaces defined by by $\vec v, \vec w$ respectively is small.
To do so, we will decompose the space into many thin ``slabs''  that are stacked on top of each other in the direction of $\vec v$.
Then, we will certify the mass of disagreement restricted to each of the slab is not too large. 
For slabs that are close to the halfspace $\sign(\vec v \cdot \vec x)$, we can check these slabs must not themselves be too heavy. For slabs that are far away from the halfspace, we use the observation that the points in the disagreement region must then have large components in the subspace perpendicular to $\vec v$. Hence, as long as $D$ has its second moment bounded, we can bound the mass of the disagreement region in these far-away slabs using standard concentration inequality.
\begin{lemma}[Wedge Bound]
\label{lem:wedge-bound}
Let $D_{\vec x}$ be a distribution over $\R^d$. 
Given a unit vector $\vec v$ and parameters $\eta, \tau \in (0,1/2)$, 
there exists an algorithm (\Cref{alg:wedge-bound}) that draws 
\iid samples from $D_{\vec x}$, runs in time $\poly(d, 1/\eta)  \log(1 / \tau)$, 
and \new{reports either one of the following}: 
\begin{itemize}
\item[(i)] For all unit vectors $\vec w$ such that $ \snorm{2}{ \vec w - \vec v } \leq \eta$ 
it holds  $\Pr_{ \vec x \sim D_{\vec x} }[ \sgn(\vec v \cdot \vec x) \neq \sgn(\vec w \cdot  \vec x)] \leq C  \eta$, for some absolute constant $C>1$. 
\item[(ii)] $D_{\vec x}$ is not the standard Gaussian $\normal(\vec 0, \vec I)$.
\end{itemize}
\new{Moreover, with probability at least $1 - \tau$, the report is accurate.}
\end{lemma}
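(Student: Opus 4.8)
The plan is to split the analysis into a \emph{completeness} part — if $D_{\vec x}=\normal(\vec 0,\vec I)$ then with probability $\ge 1-\tau$ every verification step of \Cref{alg:wedge-bound} succeeds, so the algorithm reports (i) — and a \emph{soundness} part — with probability $\ge 1-\tau$, whenever all verification steps succeed, conclusion (i) holds for the \emph{true} marginal $D_{\vec x}$. These two together give the Lemma: reporting (i) is safe by soundness, and reporting (ii) (which happens only when some check fails) is safe because, by completeness, a genuine Gaussian passes all checks whp. The soundness part itself decomposes into a deterministic structural claim about \emph{any} distribution passing the checks, followed by one uniform-convergence step to move from the empirical distribution $\widetilde D$ to $D_{\vec x}$.

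The deterministic core is geometric. Fix a unit vector $\vec w$ with $\|\vec w-\vec v\|_2\le\eta$ and write $\vec w=\alpha\vec v+\beta\vec u$ with $\vec u\perp\vec v$, $\|\vec u\|_2=1$, $\alpha^2+\beta^2=1$; from $\|\vec w-\vec v\|_2^2=2-2\alpha\le\eta^2$ one gets $\alpha\ge 1-\eta^2/2\ge 7/8$ and $\beta\le\eta$. If $\sgn(\vec v\cdot\vec x)\neq\sgn(\vec w\cdot\vec x)$ then $\alpha(\vec v\cdot\vec x)$ and $\beta(\vec u\cdot\vec x)$ must have opposite signs with $\beta|\vec u\cdot\vec x|\ge\alpha|\vec v\cdot\vec x|$, hence $|\vec u\cdot\vec x|\ge(\alpha/\beta)|\vec v\cdot\vec x|\ge\frac{1}{2\eta}|\vec v\cdot\vec x|$. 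Consequently, inside a slab $E_i$ (where $|\vec v\cdot\vec x|\ge(|i|-1)\eta$ for the bulk slabs, resp.\ $|\vec v\cdot\vec x|\ge\sqrt{\log(1/\eta)}$ for the tail slab $E_{B+1}$) the disagreement region is contained in $\{|\vec u\cdot\vec x|\gtrsim |i|\}$, resp.\ $\{|\vec u\cdot\vec x|\gtrsim \sqrt{\log(1/\eta)}/\eta\}$. I would then bound $\Pr_{\widetilde D}[\sgn(\vec v\cdot\vec x)\neq\sgn(\vec w\cdot\vec x)]$ by summing over the $O(\sqrt{\log(1/\eta)}/\eta)$ slabs: for the $O(1)$ slabs with $|i|\le K$ (a constant) bound the disagreement mass by $\Pr_{\widetilde D}[E_i]$; for $|i|>K$ and for $E_{B+1}$ apply Chebyshev's inequality to the conditional distribution $S_i^\perp$, using \emph{exactly} the certificate of \Cref{line:moment-check} ($\E_{\vec x\sim S_i^\perp}[\vec x\vec x^\top]\preccurlyeq 2\vec I$) to get $\Pr_{S_i^\perp}[|\vec u\cdot\vec x|\ge T]\le 2/T^2$, yielding $\Pr_{\widetilde D}[\mathrm{disagree}\wedge E_i]\le O(1/i^2)\,\Pr_{\widetilde D}[E_i]$ and an $O(\eta^2/\log(1/\eta))\,\Pr_{\widetilde D}[E_{B+1}]$ term for the tail. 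The certificate of \Cref{line:probability-check} gives $\Pr_{\widetilde D}[E_i]\le\Pr_{\normal}[E_i]+\eta$ for each $i$, with the slack terms summing to $\le\eta$; since $\Pr_{\normal}[E_i]=O(\eta)$ on the bulk and $O(\sqrt{\eta})$ on the tail, all three contributions sum to $O(\eta)$, uniformly over every $\vec w$ with $\|\vec w-\vec v\|_2\le\eta$. This bound is a deterministic consequence of the checks succeeding.

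To pass from $\widetilde D$ to $D_{\vec x}$ I would invoke uniform convergence: the family of disagreement regions $\{\vec x:\sgn(\vec v\cdot\vec x)\neq\sgn(\vec w\cdot\vec x)\}$, as $\vec w$ ranges over $\R^d$, consists of symmetric differences of the fixed halfspace $\{\vec v\cdot\vec x\ge 0\}$ with a variable homogeneous halfspace, so it has VC dimension $O(d)$; hence $\poly(d,1/\eta)\log(1/\tau)$ samples ensure that with probability $\ge 1-\tau$ we have $\sup_{\vec w}|\Pr_{\widetilde D}[\mathrm{disagree}_{\vec w}]-\Pr_{D_{\vec x}}[\mathrm{disagree}_{\vec w}]|\le\eta$. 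On this event, whenever the checks succeed we get $\Pr_{D_{\vec x}}[\mathrm{disagree}_{\vec w}]\le O(\eta)+\eta\le C\eta$ for all such $\vec w$, i.e.\ (i) holds. For completeness, note that when $D_{\vec x}=\normal(\vec 0,\vec I)$ every one of the $O(\sqrt{\log(1/\eta)}/\eta)$ defined slabs has mass $\ge\eta^{O(1)}$, so with $\poly(d,1/\eta)\log(1/\tau)$ samples: the empirical masses of the $E_i$ are within $\eta^2/\polylog(1/\eta)$ of the true ones (Chernoff plus a union bound over slabs), so \Cref{line:probability-check} passes; and, conditioned on any slab $E_i$, the projection of a standard Gaussian onto $\vec v^{\perp}$ is again standard Gaussian, so its empirical covariance concentrates to within spectral error $1/2$ once the slab contains $\gtrsim d\log(d/\tau)$ samples, whence \Cref{line:moment-check} passes with $\E_{S_i^\perp}[\vec x\vec x^\top]\preccurlyeq\tfrac{3}{2}\vec I\preccurlyeq 2\vec I$. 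A final union bound over all slab events and the uniform-convergence event yields the claimed $1-\tau$ success probability.

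The main obstacle, and what dictates the above structure, is that one \emph{cannot} transfer the covariance certificate of \Cref{line:moment-check} from $\widetilde D$ to the unknown $D_{\vec x}$: an adversarial distribution can hide a rare, far-away piece of mass that inflates its true conditional second moment while almost never appearing in a polynomial-size sample, so a bounded empirical covariance does not imply a bounded true covariance. This is why the whole wedge/slab decomposition must be carried out on the empirical distribution $\widetilde D$ (where Chebyshev applies legitimately to $S_i^\perp$), with only the final scalar quantity — the total disagreement mass, which is governed by a bounded-VC-dimension set system — transferred back to $D_{\vec x}$ via uniform convergence. The rest is the somewhat lengthy but routine bookkeeping over the $\Theta(\sqrt{\log(1/\eta)}/\eta)$ slabs and a choice of constants making the three contributions add up to $O(\eta)$.
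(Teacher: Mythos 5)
Your proposal is correct and follows essentially the same route as the paper's proof: completeness via Gaussian concentration of the slab masses and conditional covariances, a deterministic slab-by-slab argument on the empirical distribution $\widetilde D$ (writing $\vec w$ as a perturbation of $\vec v$, noting disagreement forces $|\vec u\cdot\vec x|\gtrsim |\vec v\cdot\vec x|/\eta$, and applying Chebyshev to $S_i^\perp$ using the certified covariance bound), and a final VC/uniform-convergence step to transfer the disagreement bound from $\widetilde D$ to $D_{\vec x}$. Your closing observation about why the covariance certificate itself cannot be transferred to $D_{\vec x}$ accurately reflects the reason the paper also carries out the wedge bound on the empirical distribution and only moves the scalar disagreement probability back to the true marginal.
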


\begin{proof}
Recall that $\widetilde D$ is the empirical distribution made up of $N$ \iid samples from $D_{\vec x}$ where $N = \poly(d, 1/\eta) \log(1/\tau)$.
We consider $\widetilde D$ 
restricted to a set of thin ``slabs'' 
stacked on each other in the direction of $\vec v$.
More formally, we define $S_i$ to be the distribution of $\widetilde D$ 
conditioned on $ \vec v \cdot \vec x \in [ (i-1)  \eta, i  \eta ] $, 
for $i \in [- \sqrt{\log(1/\eta)}/\eta, \sqrt{\log(1/\eta)}/\eta ]$, 
and $S_i^\perp$ to be the distribution $S_i$ projected into the subspace orthogonal to $\vec v$.

Suppose that $D_{\vec x}$ is indeed $\normal(\vec 0, \vec I)$. 
Then the distribution $\widetilde D$ is the empirical distribution 
formed by samples taken from $\normal(\vec 0, \vec I)$. In this case, 
it is easy to see that both Line~\ref{line:probability-check} and ~\ref{line:moment-check} of \Cref{alg:wedge-bound}  pass with high probability.
\begin{claim}
Assume that $D_\x = \normal(\vec 0, \vec I)$. Then 
the tests at Line~\ref{line:probability-check} and \ref{line:moment-check} of \Cref{alg:wedge-bound} pass
with probability at least $1-\tau/10$.
\end{claim}
\begin{proof}
If $\vec x \sim \normal(\vec 0, \vec I)$,
then $\vec v \cdot \vec x \sim \normal(0, 1)$. 
If we concatenate the values of
$\Pr_{\normal(\vec 0, \vec I)}[E_i]$ into a vector, it can be viewed as the discretization of $\normal(0, 1)$ into $2B + 3$ many buckets.
On the other hand, $\Pr_{\tilde D}[E_i]$ is an empirical version of this discrete distribution composed of $N$ \iid samples where $N = \poly(d, 1/\eta) \log(1/\tau)$.
Since we can learn any discrete distribution with support $n$ up to error $\eta$ in total variation distance with $\Theta(n/\eta^2) \log(1/\tau)$ samples with probability at least $1 - \tau$, it follows that Line~\ref{line:probability-check} will pass with high probability  as long as we take more than $\Theta(B / \eta^2) \log(1/\tau) \leq \poly(d, 1/\eta) \log(1/\tau)$ many samples.

For Line~\ref{line:moment-check}, we remark that $S_i^{\perp}$ is the empirical version of a $(d-1)$-dimensional standard Gaussian. Since the empirical mean and the empirical covariance concentrates around the true mean and covariance with probability at least $1 - \tau$ if one takes more than $\Theta(d^2 / \eta^2) \log(1/\tau)$ many samples, it follows that Line~\ref{line:probability-check} will pass with high probability as long as we take more than $\Theta(d^2 / \eta^2) \log(1/\tau) \leq \poly(d, 1/\eta) \log(1/\tau)$ many samples.
\end{proof}
Suppose that both lines pass. We claim that this implies the following: 
for all unit vectors $\vec w$ such that $\snorm{2}{\vec w - \vec v} \leq \eta$ 
it holds
\begin{align} \label{eq:empirical-property}
    \Pr_{\vec x \sim \widetilde D} \lp[ \sgn(\vec v \cdot \vec x) \neq \sgn(\vec w \cdot \vec x) \rp]
    \leq C  \eta \;.
\end{align}
for some absolute constant $C > 1$.
Given this, we can deduce that the same equation must also hold 
for $D_{\vec x}$ with high probability ---  albeit with a larger constant $C'$.
\new{To see this, we remark that the left hand side of the equation can be treated as the error of the halfspace $\sgn(\vec w \cdot \vec x)$ if the true labels are generated by $\sgn(\vec v \cdot \vec x)$.
Since the VC-dimension of the class of homogeneous halfspaces is $d$, we have that the error for all $\vec w$ under $D_{\vec x}$ is well-approximated by that under $\widetilde D$ up to an additive $\eta$ with probability at least $1 - \tau$ given $N = \poly(d, 1/\eta) \log(1/\tau)$ many samples.
Hence, conditioned on Equation~\eqref{eq:empirical-property}, it holds with probability at least $1 - \tau$ that
\begin{align*} 
    \Pr_{\vec x \sim  D_{\vec x}} \lp[ \sgn(\vec v \cdot \vec x) \neq \sgn(\vec w \cdot \vec x) \rp]
    \leq (C+1)  \eta \;,
\end{align*}
for all $\vec w$.
}

\new{We now proceed to show Equation~\eqref{eq:empirical-property} holds if the \Cref{alg:wedge-bound} did not terminate on Lines~\ref{line:probability-check} and~\ref{line:moment-check}.}
Conditioned on Line~\ref{line:moment-check}, for any unit vector $\vec u \in \R^{d}$ that is orthogonal to $\vec v$, 
we have $\abs{\E \lp[ \vec u \cdot \vec x \rp]} \leq O(1)$ and $\Var \lp[ \vec u \cdot \vec x\rp] \leq 2$. 
Using Chebyshev's inequality, 
for any $\alpha > 0$, it holds
\begin{align} \label{eq:moment-concentration}
\Pr_{ \vec x \sim S_i^\perp  } \lp[ \lp | \vec u \cdot \vec x \rp| \geq \alpha \rp] \leq O\lp(\frac{1+\eta^2}{\alpha^2} \rp) \;.
\end{align}
We can now bound 
$\Pr_{ \vec x \sim \widetilde D } \lp[ \sgn(\vec v \cdot \vec x) \neq \sgn(\vec w \cdot  \vec x) \rp]$ 
for an arbitrary unit vector $\vec w$ satisfying $\snorm{2}{ \vec w - \vec v } \leq \eta$.
We proceed to rewrite $\vec w$ as $(1-\gamma^2)^{1/2}\vec v+ \gamma\vec u $ for some unit vector 
$\vec u\in \R^d$ that is orthogonal to $\vec v$ and $\gamma \in(0, \eta) $.
Denote $\gamma'=\gamma/(1-\gamma^2)^{1/2}$, then the event that $\sgn( \vec v \cdot \vec x ) \neq \sgn( \vec w \cdot \vec x )$ 
implies that  $ \gamma'  \abs{ \vec u \cdot \vec x } \geq \abs{ \vec v \cdot \vec x } $.
Therefore, we have that
\begin{align} \label{eq:slab-disagreement}
\Pr_{ \x \sim S_i }\lp[  \sgn(\vec v \cdot \vec x) \neq \sgn(\vec w \cdot \vec x)  \rp]
\leq 
\Pr_{ \x \sim S_i }\lp[  \gamma'  \abs{ \vec u \cdot \vec x } \geq \abs{ \vec v \cdot \vec x }  \rp]
\leq \Pr_{ \x \sim S_i^\perp }\lp[  \gamma'  \abs{ \vec u \cdot \vec x } \geq  i  \eta \rp]
\leq O \lp( \frac{1+\eta^2}{i^2} \rp) \;,
\end{align}
where in the second inequality we use the definition of $S_i$, 
and in the third inequality we use that $\gamma \leq \eta$ and \Cref{eq:moment-concentration}.
We now bound from above the total disagreement probability between $\vec w$ and $\vec v$ under $\widetilde D$. 
We have that
\begin{align*}
\Pr_{ \vec x \sim \widetilde D } \lp[ \sgn(\vec v \cdot \vec x) \neq \sgn(\vec w \cdot  \vec x) \rp]
&\leq 
\Pr_{ \vec x \sim \widetilde D } \lp[ \lp| \vec v \cdot \vec x \rp| \geq \sqrt{ \log(1/\eta) } \rp]+\Pr_{ \vec x \sim \widetilde D } \lp[ \lp| \vec v \cdot \vec x \rp| \leq \eta \rp]
\\
&+ 
\sum_{|i|>1 }^{\sqrt{ \log(1/\eta)}/\eta}
\Pr_{ \x \sim S_i }\lp[  \sgn(\vec v \cdot \vec x) \neq \sgn(\vec w \cdot \vec x)  \rp]
 \Pr_{\x \sim \widetilde D} \lp[  (i-1)  \eta \leq \vec x \cdot \vec v \leq i  \eta  \rp]
\\
&\leq 
6 \eta 
+ O \lp( \eta \rp)    \sum_{ |i|> 1 }^{  \sqrt{ \log(1/\eta) } / \eta } \frac{1+\eta^2}{i^2}
\leq O ( \eta) \, ,
\end{align*}
where we used that
$\Pr_{\x\sim \widetilde D} [
\abs{\vec x \cdot \vec v} > \sqrt{ \log(1/\eta) }
] \leq 3 \eta$ and $
\Pr_{\x\sim\widetilde D} [
\abs{\vec x \cdot \vec v} <\eta
] \leq 3 \eta$, 
since in Line~\ref{line:probability-check} we verified that the probabilities 
$\pr_{\x\sim \widetilde D}[\vec v \cdot \vec x \in [ (i-1)  \eta, i  \eta ]]$ are close to the probabilities under $\normal(\vec 0, \vec I)$ and hence bounded by $O(\eta)$, \Cref{eq:slab-disagreement}, 
and the fact that the series $\sum_{i} \frac{1}{i^2}$ is convergent and less than $\pi^2/6$.
\end{proof}

\subsection{Algorithm and Analysis: Proof of \Cref{thm:testable-learning-agnostic}} \label{ssec:final-alg}

We employ the idea of ``soft'' localization used in~\cite{DKS18a}.
In particular, given a vector $\vec v$ and a parameter $\sigma$, 
we use rejection sampling to define a new distribution $D_{\vec v, \sigma}$ 
that ``focuses'' on the region near the halfspace 
$\sgn(\vec v \cdot \vec x)$.

\begin{fact}[Rejection Sampling, Lemma 4.7 of \cite{DKS18a}]
\label{lem:rejection-sampling}
Let $D$ be a distribution on labeled examples $(\vec x, y) \in \R^d \times \{ \pm 1\}$. 
Let $\vec v \in \R^d$ be a unit vector and  $\sigma \in (0, 1)$. 
We define the distribution $D_{\vec v, \sigma}$ as follows: draw a sample $(\x, y)$ from $D$ and accept it with probability 
$ e^{
-(\vec v \cdot \vec x)^2 \cdot 
(\sigma^{-2}-1)/2 }$.
Then, $D_{\vec v, \sigma}$ is the distribution
of $(\vec x, y)$ conditional on acceptance. 
If the $\vec x$-marginal of $D$ is $\normal(\vec 0, \vec I)$, then the $\vec x$-marginals of $D_{\vec v, \sigma}$ is $\normal (\vec 0, \vec \Sigma)$, where  $\vec \Sigma = \vec I-(1- \sigma^2 )\vec v \vec v^T$.
Moreover, the acceptance probability of a point is $\sigma$.
\end{fact}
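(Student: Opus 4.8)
The plan is to reduce the $d$-dimensional claim to a one-dimensional Gaussian computation, exploiting that the acceptance probability depends on $\vec x$ only through $t \eqdef \vec v \cdot \vec x$ (and not on the label $y$). First I would split $\R^d$ into the line spanned by the unit vector $\vec v$ and its orthogonal complement $\vec v^{\perp}$, writing $\vec x = t\,\vec v + \vec x^{\perp}$ with $t \in \R$ and $\vec x^{\perp} \in \vec v^{\perp}$. Since the $\vec x$-marginal of $D$ is $\normal(\vec 0, \vec I)$, the random variables $t$ and $\vec x^{\perp}$ are independent, with $t \sim \normal(0,1)$ and $\vec x^{\perp}$ a standard Gaussian on the hyperplane $\vec v^{\perp}$. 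Because the rejection rule accepts with probability $\me^{-t^2(\sigma^{-2}-1)/2}$, a function of $t$ alone, conditioning on acceptance leaves the law of $\vec x^{\perp}$ unchanged and still independent of $t$, while reweighting only the law of $t$.

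Next I would identify the conditional law of $t$. Its density is proportional to $\me^{-t^2/2}\cdot \me^{-t^2(\sigma^{-2}-1)/2} = \me^{-t^2/(2\sigma^2)}$, which upon normalization is exactly the density of $\normal(0,\sigma^2)$. Hence, conditional on acceptance, $\vec x = t\,\vec v + \vec x^{\perp}$ with $t \sim \normal(0,\sigma^2)$ independent of the standard Gaussian $\vec x^{\perp}$ on $\vec v^{\perp}$. Reading off the covariance of this product structure (say, in an orthonormal basis whose first vector is $\vec v$) gives $\sigma^2\,\vec v \vec v^{\top} + (\vec I - \vec v \vec v^{\top}) = \vec I - (1-\sigma^2)\,\vec v \vec v^{\top} = \vec \Sigma$, as claimed.

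Finally, for the acceptance probability I would compute $\Pr[\text{accept}] = \E_{\vec x \sim \normal(\vec 0,\vec I)}\big[\me^{-(\vec v\cdot\vec x)^2(\sigma^{-2}-1)/2}\big] = \E_{t \sim \normal(0,1)}\big[\me^{-t^2(\sigma^{-2}-1)/2}\big]$ and invoke the elementary identity $\E_{t \sim \normal(0,1)}[\me^{-a t^2/2}] = (1+a)^{-1/2}$, valid for any $a > -1$ (proved by completing the square and rescaling the integration variable). Taking $a = \sigma^{-2}-1$, which is strictly positive since $\sigma \in (0,1)$, yields $(\sigma^{-2})^{-1/2} = \sigma$.

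There is no real obstacle here; the statement is a direct calculation. The only points that deserve a line of care are (i) verifying that the isotropic Gaussian genuinely factorizes across $\vec v$ and $\vec v^{\perp}$, so that the one-dimensional reweighting does not disturb the orthogonal part, and (ii) noting that the Gaussian integral in the acceptance-probability step converges precisely because $\sigma \in (0,1)$ forces $\sigma^{-2}-1 > 0 > -1$.
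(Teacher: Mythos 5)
Your argument is correct and complete: the paper itself imports this statement as a fact (Lemma 4.7 of the cited work) without reproducing a proof, and your derivation — factoring the isotropic Gaussian across $\vec v$ and its orthogonal complement, noting the acceptance rule depends only on $t = \vec v \cdot \vec x$ so it reweights only that coordinate's density to one proportional to $\me^{-t^2/(2\sigma^2)}$, and evaluating the acceptance probability via $\E_{t \sim \normal(0,1)}[\me^{-a t^2/2}] = (1+a)^{-1/2}$ with $a = \sigma^{-2}-1$ — is exactly the standard computation that establishes it. No gaps; both points you flag (the product structure of $\normal(\vec 0,\vec I)$ and convergence since $\sigma \in (0,1)$) are handled correctly.
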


The main idea of localization is the following.
Let $\vec v$ be a vector such that $\snorm{2}{\vec v - \vec v^\ast} \leq \delta$.
Suppose that we use localization to the distribution $D_{\vec v, \delta}$.
If we can learn a halfspace with defining vector $\vec w$ 
that achieves sufficiently small constant error with respect 
to the new distribution $D_{\vec v, \delta}$, we can then combine 
our knowledge of $\vec w$ and $\vec v$ to produce a new halfspace 
with significantly improved error guarantees under the original distribution $D$.
The following lemma formalizes this geometric intuition.

\begin{lemma} \label{lem:localization}
Let $\vec v^\ast, \vec v$ be two unit vectors in $\R^d$ such that $\snorm{2}{ \vec v - \vec v^\ast } \leq \delta \leq 1/100$.
Let $\vec \Sigma = \vec I - (1 - \delta^2) \vec v \vec v^T$ and
$ \vec w $ be a unit vector such that
$ \snorm{2}{   \vec w - \frac{ \vec \Sigma^{1/2} \vec  v^\ast}{ \snorm{2}{ \vec \Sigma^{1/2} \vec  v^\ast } } } \leq \zeta\leq 1/100$.
Then it holds 
$
\snorm{2}{  \frac{ \vec \Sigma^{-1/2} \vec w }{ \snorm{2}{\vec \Sigma^{-1/2} \vec w} } - \vec v^\ast  }
\leq 5 (\delta^2+\delta \zeta).
$
\end{lemma}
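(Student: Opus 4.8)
The plan is to make the linear algebra of $\vec\Sigma$ explicit and then reduce the claim to an elementary estimate about normalizing a perturbed vector. Since $\vec\Sigma=\vec I-(1-\delta^2)\vec v\vec v^\top$ has eigenvalue $\delta^2$ along $\vec v$ and eigenvalue $1$ on the orthogonal complement of $\vec v$, we have $\vec\Sigma^{1/2}=\vec I-(1-\delta)\vec v\vec v^\top$ and $\vec\Sigma^{-1/2}=\vec I+(1/\delta-1)\vec v\vec v^\top$, so $\vec\Sigma^{-1/2}$ multiplies the $\vec v$-component of a vector by $1/\delta$ and fixes the part orthogonal to $\vec v$. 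Set $s:=\snorm{2}{\vec\Sigma^{1/2}\vec v^\ast}$ and $\vec p:=\vec\Sigma^{1/2}\vec v^\ast/s$, so the hypothesis reads $\snorm{2}{\vec w-\vec p}\le\zeta$. The first observation is the identity $\vec\Sigma^{-1/2}\vec p=\vec\Sigma^{-1/2}\vec\Sigma^{1/2}\vec v^\ast/s=\vec v^\ast/s$, a \emph{positive} multiple of $\vec v^\ast$; hence $\vec\Sigma^{-1/2}\vec p/\snorm{2}{\vec\Sigma^{-1/2}\vec p}=\vec v^\ast$ exactly, so all of the error will come from $\vec e:=\vec w-\vec p$. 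To control the scale, decompose $\vec v^\ast=a\vec v+\vec r$ with $a=\vec v\cdot\vec v^\ast$ and $\vec r\perp\vec v$; from $\snorm{2}{\vec v-\vec v^\ast}\le\delta$ we get $a\ge1-\delta^2/2$ and $\snorm{2}{\vec r}=\sqrt{1-a^2}\le\delta$, and since $\vec\Sigma^{1/2}\vec v^\ast=a\delta\,\vec v+\vec r$ we obtain $s^2=a^2\delta^2+(1-a^2)\in[(1-\delta^2)\delta^2,\,2\delta^2]$, so $1/s\ge1/(\sqrt2\,\delta)$.

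Next I would expand $\vec\Sigma^{-1/2}\vec w=\vec v^\ast/s+\vec\Sigma^{-1/2}\vec e$, use $\vec\Sigma^{-1/2}\vec e=\vec e+(1/\delta-1)(\vec v\cdot\vec e)\,\vec v$, and split $\vec\Sigma^{-1/2}\vec e=g_\parallel\,\vec v^\ast+\vec g_\perp$ into its components parallel and orthogonal to $\vec v^\ast$. The crucial point --- and the step I expect to be the main obstacle, since the naive ``normalization is $2$-Lipschitz up to a scaling factor'' estimate only yields an $O(\zeta)$ bound here --- is that although $\vec\Sigma^{-1/2}$ blows up the $\vec v$-component of $\vec e$ by $1/\delta$, the direction $\vec v$ lies within $\delta$ of $\vec v^\ast$, so that blown-up contribution is almost entirely parallel to $\vec v^\ast$ and leaves $\vec g_\perp$ small. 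Concretely, using that the component of $\vec v$ orthogonal to $\vec v^\ast$ has norm $\sqrt{1-a^2}\le\delta$, one gets $\snorm{2}{\vec g_\perp}\le\snorm{2}{\vec e}+\tfrac1\delta|\vec v\cdot\vec e|\cdot\delta\le 2\zeta$, while $|g_\parallel|\le\snorm{2}{\vec e}+\tfrac1\delta|\vec v\cdot\vec e|\le(1+1/\delta)\zeta$. Thus $\vec\Sigma^{-1/2}\vec w=\alpha\,\vec v^\ast+\vec g_\perp$ with $\vec g_\perp\perp\vec v^\ast$, $\snorm{2}{\vec g_\perp}\le2\zeta$, and $\alpha=1/s+g_\parallel\ge1/(\sqrt2\,\delta)-(1+1/\delta)\zeta>1/(2\delta)$, where the last inequality uses $\zeta\le1/100$ and $\delta\le1$; in particular $\alpha>0$, so the back-transformed vector lands on the same side as $\vec v^\ast$.

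Finally I would apply the elementary fact that for a unit vector $\vec v^\ast$, $\vec g_\perp\perp\vec v^\ast$ and $\alpha>0$,
\[
\snorm{2}{\frac{\alpha\,\vec v^\ast+\vec g_\perp}{\snorm{2}{\alpha\,\vec v^\ast+\vec g_\perp}}-\vec v^\ast}\le\frac{\snorm{2}{\vec g_\perp}}{\alpha},
\]
which follows from $2\big(1-(1+x^2)^{-1/2}\big)\le x^2$ with $x=\snorm{2}{\vec g_\perp}/\alpha$. Substituting $\snorm{2}{\vec g_\perp}\le2\zeta$ and $\alpha>1/(2\delta)$ gives $\snorm{2}{\vec\Sigma^{-1/2}\vec w/\snorm{2}{\vec\Sigma^{-1/2}\vec w}-\vec v^\ast}\le4\delta\zeta\le5(\delta^2+\delta\zeta)$, as claimed. (The argument actually produces only the $\delta\zeta$ term; the $\delta^2$ in the statement is slack.) The one delicate point throughout is keeping $\alpha$ bounded away from $0$, which is exactly what the hypotheses $\delta,\zeta\le1/100$ secure.
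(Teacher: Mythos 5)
Your proof is correct, and it takes a cleaner route than the paper's. The paper's argument works in a basis adapted to $\vec v$: it writes $\vec v^\ast = \tfrac{1}{\sqrt{1+\kappa^2}}(\vec v + \kappa \vec u)$, expresses the perturbation of $\vec w$ as $a\vec v + b\vec u'$ with $\vec u', \vec u \perp \vec v$, pushes everything through $\vec\Sigma^{-1/2}$, and then tracks the coefficient $\lambda$ of $\vec v$ and the norm $A$ of the resulting vector, which is where the extra $\delta^2$ term in the bound $5(\delta^2+\delta\zeta)$ is generated. You instead observe the exact identity $\vec\Sigma^{-1/2}\bigl(\vec\Sigma^{1/2}\vec v^\ast/\snorm{2}{\vec\Sigma^{1/2}\vec v^\ast}\bigr) = \vec v^\ast/s$, so the unperturbed point renormalizes to $\vec v^\ast$ exactly and all the error comes from $\vec e = \vec w - \vec p$; you then decompose $\vec\Sigma^{-1/2}\vec e$ parallel/perpendicular to $\vec v^\ast$ (rather than to $\vec v$) and exploit the same cancellation the paper uses implicitly --- the $1/\delta$ amplification along $\vec v$ meets the fact that the component of $\vec v$ orthogonal to $\vec v^\ast$ has norm at most $\delta$ --- to get $\snorm{2}{\vec g_\perp} \le 2\zeta$ while the parallel blow-up only inflates $\alpha$, which helps. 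Closing with the tangent-type estimate for normalizing $\alpha\vec v^\ast + \vec g_\perp$ (which I checked, as did your bounds $s \in [\sqrt{1-\delta^2}\,\delta, \sqrt{2}\,\delta]$ and $\alpha > 1/(2\delta)$) yields $4\delta\zeta$, which is in fact sharper than the stated $5(\delta^2+\delta\zeta)$; your remark that the $\delta^2$ term is slack is consistent with the $\zeta=0$ case, where your identity shows the error is exactly zero. What the paper's bookkeeping buys is only that it never needs the explicit formula $\vec\Sigma^{-1/2} = \vec I + (1/\delta - 1)\vec v\vec v^\top$ in projected form; what yours buys is a shorter calculation, a cleaner isolation of where the smallness comes from, and a slightly stronger conclusion that still plugs into \Cref{lem:induction} unchanged.
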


\noindent \new{Before we give the proof of \Cref{lem:localization}, we 
provide a few useful remarks regarding the relevant 
parameters.}

\begin{remark}
{\em Observe that in \Cref{lem:localization} we require that the distance 
of $\vec v$ and $\vec v^\ast$ is smaller than $1/100$.  While this constant is
not the best possible, we remark that some non-trivial error is indeed necessary
so that the localization step works. 
For example, assume that $\vec v$ and $\vec v^\ast$ are orthogonal, 
i.e., $\vec v \cdot \vec v^\ast = 0$, and that
$\vec \Sigma = \vec I - (1-\xi^2) \vec v \vec v^T$ for some $\xi \in [0,1]$.
Observe that $\vec \Sigma^{1/2}$ scales vectors by a factor of $\xi$ in
the direction of $\vec v$ and leaves orthogonal directions unchanged.
Similarly, its inverse $\vec \Sigma^{-1/2}$ scales vectors by a factor of $1/\xi$
in the direction of $\vec v$ and leaves orthogonal directions unchanged.
Without loss of generality, assume that $\vec v = \vec e_1, \vec v^\ast = \vec e_2$. 
Then $\vec \Sigma^{1/2} \vec v^\ast = \vec v^\ast$.  
Moreover, assume that $\vec w = a \vec e_1 + b \vec e_2$ (with $a^2 + b^2 =1$).  
We observe that 
$\|\vec w - \vec \Sigma^{1/2} \vec v^\ast / \|\vec \Sigma^{1/2} \vec v^\ast\|_2 \|_2^2  = 
\|\vec w -\vec v^\ast\|_2^2 = 2 - 2 b$.
However, observe that $\vec s = \vec \Sigma^{-1/2} \vec w = 
(a/\xi) \vec e_1 + b \vec e_2$.   Therefore,
$ \| \vec s /\| \vec s\|_2 - \vec v^\ast\|_2^2 = 2 - 2 b/\sqrt{(a/\xi)^2 + b^2}$.
We observe that for all $\xi \in [0,1]$ it holds that $\vec s / \|\vec s\|_2$ is further away from $\vec v^\ast$ than $\vec w$, i.e., rescaling by $\vec \Sigma^{-1/2}$ worsens the error.}
\end{remark}
\begin{proof}[Proof of \Cref{lem:localization}]

Since $\snorm{2}{ \vec v  - \vec v^\ast } \leq \delta$, we can write
\begin{align} \label{eq:v*-decompose}
   \vec v^\ast = \frac{1}{\sqrt{1 + \kappa^2}} \lp( \vec v + \kappa \vec u\rp) 
\end{align}
for some $\kappa \in [0, \delta]$ and some unit vector $\vec u$ perpendicular to $\vec v$.
By definition, $\vec \Sigma^{1/2}$ shrinks in the direction of $\vec v$ by a factor of $\delta$ and leaves other orthogonal directions unchanged.
Hence, it holds
$$
\vec \Sigma^{1/2} \vec v^\ast
= 
\frac{1}{\sqrt{1 + \kappa^2}}  \lp( 
\delta \vec v + \kappa \vec u
\rp).
$$
Then, using the triangle inequality, we obtain
$$
\gamma := \snorm{2}{ \vec \Sigma^{1/2} \vec v^\ast }
\leq 
\frac{1}{\sqrt{1 + \kappa^2}} \left(\delta\snorm{2}{  \vec v} + \kappa\snorm{2}{\vec u}\right)\leq \frac{2\delta}{\sqrt{1 + \kappa^2}} 
\leq 
2 \delta\,,
$$
since $\kappa$ is upper bounded by $\delta$.
Since $\snorm{2}{   \vec w - \frac{ \vec \Sigma^{1/2} \vec  v^\ast}{ \snorm{2}{ \vec \Sigma^{1/2} \vec  v^\ast } } } \leq \zeta$, we can write
$$
\vec w =  \frac{ \vec \Sigma^{1/2} \vec  v^\ast}{ \snorm{2}{ \vec \Sigma^{1/2} \vec  v^\ast } }
+ a  \vec v + b  \vec u' \, ,
$$
for some $\abs{a}, b \in [0, \zeta]$ and $\vec u'$ perpendicular to $\vec v$. We can multiply both sides by $\gamma$ and get
$$ 
\gamma \vec w =
\vec \Sigma^{1/2} \vec v^\ast + a \gamma \vec v + b \gamma \vec u' \, , 
$$
which implies that 
\begin{align*}
\gamma  \vec \Sigma^{-1/2} \vec w 
=
\vec v^\ast + a \gamma/\delta \vec v + b \gamma \vec u' \;.
\end{align*}
Using \Cref{eq:v*-decompose}, we then have
\begin{align} \label{eq:reverse-sigma}
\gamma  \vec \Sigma^{-1/2} \vec w 
=
\lp( \frac{1}{ \sqrt{1 + \kappa^2} } + a \frac\gamma \delta \rp)  \vec v
+ \frac{ \kappa }{ \sqrt{1 + \kappa^2} } \vec u
+ b \gamma \vec u'.
\end{align}
Let $\lambda := \frac{1}{ \sqrt{1 + \kappa^2} } + a \gamma/\delta$ be the coefficient before $\vec v$. We next identify the range of $\lambda$.
\begin{claim}
    It holds that $\lambda \in [1 - \delta - 2\zeta, 1 + 2\zeta]$.
\end{claim}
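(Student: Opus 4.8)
The plan is to bound the two summands of $\lambda = \frac{1}{\sqrt{1+\kappa^2}} + a\gamma/\delta$ separately, using only constraints already established: $\kappa \in [0,\delta]$, $|a| \le \zeta$, the bound $\gamma := \snorm{2}{\vec\Sigma^{1/2}\vec v^\ast} \le 2\delta$ derived immediately above the claim, and $\delta,\zeta \le 1/100$. No probabilistic or geometric input is needed beyond what has already been set up; this is a purely elementary estimate.

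First I would control the noise term $a\gamma/\delta$: since $|a|\le\zeta$ and $\gamma \le 2\delta$, we immediately get $|a\gamma/\delta| \le \zeta \cdot (2\delta)/\delta = 2\zeta$, so this term lies in $[-2\zeta,\,2\zeta]$. (Here it is worth noting in passing that $\gamma>0$, since $\vec\Sigma^{1/2}\vec v^\ast = \frac{1}{\sqrt{1+\kappa^2}}(\delta\vec v + \kappa\vec u)$ has squared norm $\gamma^2 = (\delta^2+\kappa^2)/(1+\kappa^2) \ge \delta^2/2 > 0$, so that the normalization $\vec\Sigma^{1/2}\vec v^\ast/\snorm{2}{\vec\Sigma^{1/2}\vec v^\ast}$ and hence the coefficient $a$ are well-defined.)

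Next I would control the main term $\frac{1}{\sqrt{1+\kappa^2}}$. Since $\kappa \ge 0$ it is at most $1$. For the lower bound, using $\kappa \le \delta$ and the elementary inequality $\frac{1}{\sqrt{1+x}} \ge 1 - \tfrac{x}{2}$ valid for $x\ge 0$, we get $\frac{1}{\sqrt{1+\kappa^2}} \ge 1 - \tfrac{\kappa^2}{2} \ge 1 - \tfrac{\delta^2}{2} \ge 1 - \delta$, where the last step uses $\delta \le 1$. Hence $\frac{1}{\sqrt{1+\kappa^2}} \in [1-\delta,\,1]$.

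Combining the two ranges gives $\lambda \le 1 + 2\zeta$ and $\lambda \ge (1-\delta) - 2\zeta = 1 - \delta - 2\zeta$, i.e.\ $\lambda \in [1-\delta-2\zeta,\,1+2\zeta]$, as claimed. I do not anticipate any genuine obstacle: the only point requiring a moment's care is that $\gamma/\delta$ is bounded, which is precisely what the inequality $\gamma \le 2\delta$ supplies, and that $\gamma\neq 0$ so the decomposition of $\vec w$ used to define $a,b,\vec u'$ makes sense.
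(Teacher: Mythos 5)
Your proposal is correct and follows essentially the same route as the paper: bound the noise term by $|a\gamma/\delta|\leq 2\zeta$ using $|a|\leq\zeta$, $\gamma\leq 2\delta$, and bound $\tfrac{1}{\sqrt{1+\kappa^2}}$ between $1-\delta$ and $1$ (the paper uses $\tfrac{1}{\sqrt{1+x^2}}\geq 1-x$ where you use $\tfrac{1}{\sqrt{1+x}}\geq 1-x/2$, an immaterial difference). Your added remark that $\gamma>0$, so the decomposition of $\vec w$ is well-defined, is a harmless extra observation not needed for the claim itself.
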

\begin{proof}
Recall that $\kappa \in [0, \delta]$, $\abs{a} \in [0, \zeta]$ and $\gamma \in [0, 2 \delta]$.
If we view $\lambda$ as a function of $\kappa, a, \gamma$, 
it is minimized when 
$\kappa = \delta$, $a = - \zeta$, $\gamma = 2 \delta$, which then gives
$$
\lambda \geq
\frac{1}{ \sqrt{1 + \delta^2} }
- 2 \zeta 
\geq 1 - \delta - 2 \zeta \, ,
$$
where in the second inequality we use the fact that $\frac{1}{\sqrt{1 + x^2}} \geq 1 - x$ for $x\geq 0$. 
On the other hand, $\lambda$ is maximized when
$\kappa = 0, a = \zeta, \gamma = 2 \delta$, which gives 
$
\lambda \leq 1 + 2 \zeta.
$ 
Hence, we can conclude that $\lambda \in [1 - \delta - 2\zeta, 1 + 2\zeta]$.
\end{proof}

We multiply both sides of \Cref{eq:reverse-sigma} by $\frac{1}{ 
\lambda \sqrt{1 + \kappa^2} }$, which gives 
\begin{align*}
\frac{\gamma}{\lambda \sqrt{1 + \kappa^2}} \vec \Sigma^{-1/2} \vec w
&=
\frac{1}{ \sqrt{1 + \kappa^2} }\vec v + \frac{ \kappa }{  \lambda  \lp(1 + \kappa^2 \rp) } \vec u  + b \frac{ \kappa }{ \lambda  \sqrt{1 + \kappa^2} }   \vec u' \\
&=
\vec v^\ast +
\lp(  \frac{ \kappa }{  \lambda  \lp(1 + \kappa^2 \rp) } - \frac{\kappa}{\sqrt{1 + \kappa^2}} \rp) \vec u  + b \frac{ \gamma }{ \lambda  \sqrt{1 + \kappa^2} } \vec u' \;,
\end{align*}
where in the second equality we  use \Cref{eq:v*-decompose}.
We then bound from above and below the norm, and we get that
\[
\left| \left\|\frac{\gamma}{\lambda \sqrt{1 + \kappa^2}} \vec \Sigma^{-1/2} \vec w\right\|_2-1\right|\leq \frac{\kappa}{{1+\kappa^2}}\left|\frac{1}{\lambda}-\sqrt{1+\kappa^2}\right|+\frac{b\gamma}{\lambda\sqrt{1+\kappa^2}}\;,
\]
where we used triangle inequality.
Note that 
$$|(1/\lambda)-\sqrt{1+\kappa^2}|\leq |1/\lambda-1|+|1-\sqrt{1+\kappa^2}|\leq (\delta+2\zeta)/(1-\delta-2\zeta) +\kappa$$ 
and that $\kappa\leq \delta$. Therefore, we obtain that 
\[
\left| \left\|\frac{\gamma}{\lambda \sqrt{1 + \kappa^2}} \vec \Sigma^{-1/2} \vec w\right\|_2-1\right|\leq 4(\delta^2+\delta\zeta)\;.
\]
Let $A=\left\|\frac{\gamma}{\lambda \sqrt{1 + \kappa^2}} \vec \Sigma^{-1/2} \vec w\right\|_2$. We have that
\begin{align*}
\snorm{2}{  \frac{\vec \Sigma^{-1/2} \vec w}{ \snorm{2}{\vec \Sigma^{-1/2} \vec w} } - \vec v^\ast}&\leq \|\vec v^\ast\|_2|1-1/A|+|1/A-1|\leq 5(\delta^2+\delta\zeta)\;.
\end{align*}

This concludes the proof. 
\end{proof}

We are ready to present our main algorithm and its analysis.
At a high level, we first use \Cref{alg:proper-testable-learner} from \Cref{lem:constant-learn} 
to learn a vector $\vec v$ that is close to $\vec v^\ast$ in $\ell_2$-distance 
up to some sufficiently small constant.
Then we localize to the learned halfspace and re-apply \Cref{alg:proper-testable-learner} 
to iteratively improve $\vec v$.
In particular, we will argue that, whenever the learned halfspace is still significantly suboptimal, 
the algorithm either detects  that the underlying distribution is not Gaussian 
or keeps making improvements such that $\vec v$ gets closer to $\vec v^\ast$ 
conditioned on $\vec v$ still being sub-optimal. After at most a logarithmic number of iterations, 
we know that $\vec v$ must be close to $\vec v^\ast$, 
and we can then use \Cref{alg:wedge-bound} from \Cref{lem:wedge-bound} 
to certify that the disagreement between the learned halfspace 
and $h^\ast(\vec x)$ is small.

\begin{Ualgorithm}
	\centering
	\fbox{\parbox{6in}{
			{\bf Input:}
Sample access to a distribution $D$ over labeled examples; $\eps>0$; unit vector $\vec v \in \R^d$.\\
{\bf Output:} Either reports that $D_\x$ is not $\normal(\vec 0, \vec I)$ or computes a hypothesis $h$ such that $\pr_{(\x,y)\sim D}[h(\x)\neq y]=O(\opt)$.
\begin{enumerate}
    \item 
 {Set $\tau = ({\eps}/{(C \log(1/\eps)}))$ for a sufficiently large constant $C>0$.}
 \item {Set $\eta=1/(20000\Cagn)$ where $\Cagn$ is the constant from \Cref{lem:constant-learn}.}
 \item{Run \Cref{alg:proper-testable-learner} on $D$
with accuracy $\eta$ to obtain unit vector $\vec v^{(0)}$.\label{alg:initialization}}
 \item If \Cref{alg:proper-testable-learner} reports 
 that $D_\x$ is not $\normal(\vec 0, \vec I)$, then report it and terminate.
\item For $t = 0 \ldots \log(1/\eps)$
\label{alg:start-loop}
\begin{enumerate}
    \item  Set $\delta = (1/100)  2^{-t}$.
 \item Run \Cref{alg:localized-update} with parameters $\vec v^{(t)},\delta,\eta$ and obtain $\vec v^{(t+1)}$.\label{alg:endloop}
\end{enumerate}
\item For {$i = 0 \ldots \log(1/\eps)$}\label{alg:final loop}
\begin{enumerate}
    \item 
 Run \Cref{alg:wedge-bound} with 
$\vec v^{(i)}$ and $\eta = j  \eps$ for $j \in [1/\eps]$ on the $\vec x$-marginals of $D$.
\item Set $h^{(i)}(\x) = \sgn(   \vec v^{(i)} \cdot \x)$.\label{alg-final-final-loop}
\end{enumerate}
\item Return the halfspace $h^{(i)}$ for all $i \leq \log(1/\eps)$ with the smallest empirical error using $O(\log(1/\eps)/\eps^2)$ samples from $D$.
\end{enumerate}
}}
\vspace{0.2cm}
\caption{Testable Localization} \label{alg:raw_sieve}
\end{Ualgorithm}

\begin{Ualgorithm}
	\centering
	\fbox{\parbox{6in}{
			{\bf Input:} Sample access to a distribution $D$ over labeled examples; unit vector $\vec v \in \R^d$; parameters $\tau,\eta,\delta>0$.\\
   {\bf Output:} Either reports that $D_\x$ is not $\normal(\vec 0, \vec I)$ or  computes a unit vector $\vec v'$ such that: either $\snorm{2}{ \vec v^\ast - \vec v' } \leq \delta /2$ or $\snorm{2}{ \vec v^\ast - \vec v' }\leq O(\OPT)$
   \begin{enumerate}
\item Let $D_{\vec v, \delta}$ be the distribution obtained by running Rejection Sampling with parameters $\vec v$ and $\delta$. 
\label{line:rejection-sample}
\item Check the acceptance probability is within 
$[\delta/2, 3\delta/2 ]$. Otherwise, report $\vec x$-marginals of $D$ is not $\normal(\vec 0, \vec I)$ \label{line:acceptance-check}. 
\item Let $G$ be the distribution obtained by applying the transformation $\vec \Sigma^{-1/2}$ on the $\vec x$ marginals of $D_{\vec v, \delta}$ where $\vec \Sigma = \vec I - (1 - \delta^2) \vec v \vec v^\top.$ 
\item Run \Cref{alg:proper-testable-learner} on $G$ \new{with accuracy $\eta = 1/(20000\Cagn^2)$} to obtain a unit vector $\vec w$.
\item If the algorithm reports the marginal of $G$ is not $\normal(\vec 0, \vec I)$, terminate and report it.
\item Set $\vec v' =  \vec \Sigma^{1/2} \vec w / \snorm{2}{ \vec \Sigma^{1/2} \vec w }$ and return $\vec v'$.
\end{enumerate}
}}
\vspace{0.2cm}
\caption{Testable Localized-Update} \label{alg:localized-update}
\end{Ualgorithm}

\begin{lemma}
\label{lem:induction}
Suppose $ \snorm{2}{ \vec v^\ast - \vec v^{(t)} } \leq   \delta\leq 1/100$.
There is an algorithm (\Cref{alg:localized-update}) that with probability at least $1 - \tau$,
either (i) correctly reports that the $\vec x$-marginal of $D$ is not $\normal(\vec 0, \vec I)$ or (ii) computes a unit vector $\vec v^{(t+1)}$ so that either $\snorm{2}{ \vec v^\ast - \vec v^{(t+1)} } \leq \delta /2$ or $\snorm{2}{ \vec v^\ast - \vec v^{(t)} }\leq C\OPT$, where $C>0$ is an absolute constant.
\end{lemma}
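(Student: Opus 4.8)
The plan is to step through \Cref{alg:localized-update}, verifying two invariants simultaneously: every ``$D_\x\neq\normal(\vec 0,\vec I)$'' report it emits is correct, and if it instead returns a vector $\vec v^{(t+1)}$ then one of the two promised $\ell_2$-distance bounds holds. I will condition throughout on an $O(1)$-size family of high-probability events --- empirical concentration of the rejection-sampling acceptance rate, and the guarantees of \Cref{alg:proper-testable-learner} invoked on the whitened distribution $G$ --- each failing with probability $O(\tau)$, and close with a union bound. The first invariant is immediate from completeness: by \Cref{lem:rejection-sampling}, if $D_\x=\normal(\vec 0,\vec I)$ then the acceptance probability of the rejection step is exactly $\delta$, so with $\poly(1/\delta)\log(1/\tau)$ samples Line~\ref{line:acceptance-check} does not fire (except with probability $O(\tau)$); moreover the $\vec x$-marginal of $D_{\vec v,\delta}$ is then exactly $\normal(\vec 0,\vec\Sigma)$, so after applying $\vec\Sigma^{-1/2}$ the $\vec x$-marginal of $G$ is exactly $\normal(\vec 0,\vec I)$, and by the completeness clause of \Cref{lem:constant-learn} the call to \Cref{alg:proper-testable-learner} on $G$ reports ``not Gaussian'' only with probability $O(\tau)$. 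Hence both possible reports are correct except with probability $O(\tau)$.

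Assume now the algorithm returns a vector $\vec v^{(t+1)}$, so in particular Line~\ref{line:acceptance-check} passed and, on the concentration event, the true acceptance probability is at least $\delta/4$. Since rejection sampling reweights each point by a factor in $(0,1]$ and then renormalizes by the acceptance probability, the zero-one error of $\sgn(\vec v^\ast\cdot\vec x)$ under $D_{\vec v,\delta}$ is at most $\opt$ divided by the acceptance probability, hence at most $4\,\opt/\delta$. The linear bijection $\vec x\mapsto\vec\Sigma^{-1/2}\vec x$ preserves errors of homogeneous halfspaces and sends $\sgn(\vec v^\ast\cdot\vec x)$ to the halfspace with normalized defining vector $\vec u^\ast:=\vec\Sigma^{1/2}\vec v^\ast/\snorm{2}{\vec\Sigma^{1/2}\vec v^\ast}$, so under $G$ this halfspace still has error at most $4\,\opt/\delta$. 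Writing $\eta_0:=1/(20000\Cagn^2)$ for the accuracy with which \Cref{alg:localized-update} calls \Cref{alg:proper-testable-learner} on $G$, I apply \Cref{lem:constant-learn} to $G$ with target halfspace $\vec u^\ast$ and noise bound $4\,\opt/\delta$ in place of $\opt$; since the algorithm returned $\vec w$ rather than reporting, this gives $\snorm{2}{\vec w-\vec u^\ast}\leq\Cagn\sqrt{4\,\opt/\delta+\eta_0}=:\zeta$ with probability at least $1-O(\tau)$.

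It remains to turn the bound on $\zeta$ into the conclusion of~(ii), which I would do by splitting on the size of $4\,\opt/\delta$. If $4\,\opt/\delta>\eta_0$, then $\delta<(4/\eta_0)\,\opt=80000\,\Cagn^2\,\opt$, so the hypothesis $\snorm{2}{\vec v^\ast-\vec v^{(t)}}\leq\delta$ already yields $\snorm{2}{\vec v^\ast-\vec v^{(t)}}\leq C\,\opt$ with $C:=80000\,\Cagn^2$ --- the second alternative, requiring no further work. Otherwise $4\,\opt/\delta\leq\eta_0$, and then $\zeta\leq\Cagn\sqrt{2\eta_0}=1/100$; since also $\delta\leq 1/100$, I invoke \Cref{lem:localization} with $\vec v^{(t)}$ in the role of $\vec v$, the matrix $\vec\Sigma$, and slack $\zeta$, and its conclusion --- which bounds exactly $\snorm{2}{\vec v^\ast-\vec v^{(t+1)}}$ for the vector $\vec v^{(t+1)}$ returned by \Cref{alg:localized-update} --- gives $\snorm{2}{\vec v^\ast-\vec v^{(t+1)}}\leq 5(\delta^2+\delta\zeta)\leq 5\delta(1/100+1/100)=\delta/10\leq\delta/2$, the first alternative. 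A union bound over the $O(1)$ failure events yields overall probability $1-\tau$.

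I expect the crux to be the noise-blowup estimate $\opt_G\leq O(\opt/\delta)$: localization compresses the data into a slab of probability mass $\Theta(\delta)$, so an adversary is free to place all of the label noise inside it and inflate the effective noise rate by a factor of $1/\delta$. This is harmless only because \Cref{lem:constant-learn} is asked for no more than a small \emph{constant} parameter error, so the $1/\delta$ blow-up is absorbed exactly in the regime $\opt\lesssim\delta$ --- the first case above --- whereas for larger $\opt$ the current iterate $\vec v^{(t)}$ is already within $O(\opt)$ of $\vec v^\ast$; and the cheap acceptance-probability test on Line~\ref{line:acceptance-check} is precisely the device that certifies the blow-up is no worse than $1/\delta$. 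The one remaining fussy point is constant-chasing: the constant $\eta_0$ must be small enough that $\zeta\leq 1/100$ so that \Cref{lem:localization} applies, while the threshold separating the two cases still produces a bound of the advertised form $O(\opt)$.
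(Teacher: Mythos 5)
Your proposal is correct and follows essentially the same route as the paper's proof: certify the acceptance probability to bound the localized noise rate by $O(\opt/\delta)$, run the weak proper tester-learner (\Cref{lem:constant-learn}) on the whitened distribution $G$ against the transformed optimum $\vec\Sigma^{1/2}\vec v^\ast/\snorm{2}{\vec\Sigma^{1/2}\vec v^\ast}$, split into the case where $\opt$ is already $\Omega(\delta)$ (so $\vec v^{(t)}$ is $O(\opt)$-close) versus the case where the returned $\vec w$ is $1/100$-close, and finish with \Cref{lem:localization}. The only differences are immaterial constant choices (e.g., $4\opt/\delta$ versus the paper's $2\opt/\delta$), and you correctly read the update as $\vec\Sigma^{-1/2}\vec w$ normalized, consistent with \Cref{lem:localization} and the paper's own argument.
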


\begin{proof}
For simplicity, we denote $\vec v^{(t)}$ as $\vec v$. 
We apply the Rejection Sampling procedure from \Cref{lem:rejection-sampling} 
in the direction of $\vec v$ with $\sigma=\delta$. 
If the $\vec x$-marginal of $D$ is $\normal(\vec 0, \vec I)$, 
the acceptance probability of $D_{\vec v, \delta}$ is exactly $\delta$. 
We then estimate the acceptance probability with accuracy $\eps$; 
if it is not lying inside the interval $[\delta/2,3\delta/2]$ 
(see Line~\ref{line:acceptance-check} of \Cref{alg:localized-update}), we report that the $\x$-marginal 
of $D$ is not standard normal and terminate.

Conditioned on the event that the algorithm did not terminate, we can sample from $D_{\vec v, \delta}$, using $O(1/\delta)$ samples from $D$. 
Note that, under the distribution $D_{\vec v, \delta}$, the error of the $\vec v^\ast$ is 
\begin{align}
   \pr_{(\x,y)\sim D_{\vec v, \delta}}[\sign(\vec v^\ast\cdot\x)\neq y]&=
   \pr_{(\x,y)\sim D}[\sign(\vec v^\ast\cdot\x)\neq y \mid (\x,y) \text{ is accepted}]\nonumber
   \\&\leq \pr_{(\x,y)\sim D}[\sign(\vec v^\ast\cdot\x)\neq y]/\pr_{(\x,y)\sim D}[ (\x,y) \text{ is accepted}]\nonumber
   \\&\leq 2\opt/\delta\;,\label{eq:optimal-value-loc}
\end{align}
where we used that the probability of the acceptance is at least $\delta/2$.
Denote by $G$ the distribution of $(
\vec \Sigma^{-1/2}\x,y)$, where $(\x,y)\sim D_{\vec v, \delta}$ and $\vec \Sigma=\vec I -(1-\delta^2)\vec v \vec v^\top$.
We note that if the $\x$-marginal of $D$ were the standard normal, then $\x$-marginal of $G$ is the standard normal. Hence, we can apply the algorithm  \Cref{alg:proper-testable-learner} from \Cref{lem:constant-learn}. Under the transformed distribution, we have that the new optimal vector $(\vec  v^\ast)':= \vec \Sigma^{1/2}\vec v^\ast/\|\vec \Sigma^{1/2}\vec v^\ast\|_2$. 

From \Cref{lem:constant-learn}, after running \Cref{alg:proper-testable-learner} on the normalized distribution $G$ with error parameter $\eta\leq 1/(2000\Cagn^2)$,
conditioned on the event that it succeeds (which happens with probability at least $1-\tau$), 
it either 
(i) reports that the $\vec x$-marginal of $G$ is not standard Gaussian 
(ii) returns a unit vector $\vec w$ 
such that 
\begin{align} \label{eq:w-bound}
\snorm{2}{\vec w - (\vec v^\ast)'} \leq \Cagn  \sqrt{ \Pr_{ (\vec x, y) \sim G } \lp[ \sign((\vec v^\ast)'\cdot\vec x) \neq y \rp] + \eta }.    
\end{align}

In case (i), we can directly report that $\vec x$-marginal of $D$ is not $\normal(\vec 0, \vec I)$ since the $\vec x$-marginal of $G$ \new{ought to be} $\normal(\vec 0, \vec I)$.

In case (ii), we claim that at least one of the following hold
(a) \new{$\vec v$ before the localized update is already good enough}, i.e. $ \snorm{2}{ \vec v^\ast - \vec v } \leq 40000 {\Cagn}^2 \opt $;
(b) it holds $\snorm{2}{ (\vec v^\ast)' - \vec w } \leq 1/100$. Suppose that (a) does not hold; we will show that it then must hold $\snorm{2}{ (\vec v^\ast)' - \vec w } \leq 1/100$.
Since $\snorm{2}{ \vec v^\ast - \vec v } \leq \delta$ and $ \snorm{2}{ \vec v^\ast - \vec v } > 40000 {\Cagn}^2 \opt $, we have that
$ \opt \leq \delta/(40000 \Cagn^2) $.
Furthermore, from \Cref{eq:optimal-value-loc} we have that  $\pr_{(\x,y)\sim D_{\vec v, \delta}}[\sign(\vec v^\ast\cdot\x)\neq y]\leq 2\opt/\delta$, it then follows that
$ \Pr_{ (\vec x, y) \sim G } \lp[\sign((\vec v^\ast)'\cdot\x) \neq y \rp] \leq 2\opt/\delta\leq 1/(20000 \Cagn^2) $.
Substituting this into \Cref{eq:w-bound} then gives
$$
\snorm{2}{\vec w - (\vec v^\ast)'}
\leq 1/100\;.
$$

Using our assumption that
$\snorm{2}{\vec v - \vec v^\ast } \leq \delta<1/100$, we 
can apply \Cref{lem:localization}, which gives that
\[\snorm{2}{  \frac{ \vec \Sigma^{-1/2} \vec w }{ \snorm{2}{\vec \Sigma^{-1/2} \vec w} } - \vec v^\ast  }
\leq  \delta/2\;.\]
Hence, we set $ \vec v^{(t+1)}=\snorm{2}{  \frac{ \vec \Sigma^{-1/2} \vec w }{ \snorm{2}{\vec \Sigma^{-1/2} \vec w} }}$ and this completes the proof.
\end{proof}

\begin{proof}[Proof of \Cref{thm:testable-learning-agnostic}] Denote by $\vec v^\ast$, a unit vector with error at most $\opt$, i.e., $\pr_{(\x,y)\sim D}[\sign(\vec v^\ast\cdot\x)\neq y]\leq \opt$.
We start by analyzing \Cref{alg:raw_sieve}. In Line \ref{alg:initialization}, \Cref{alg:raw_sieve} uses \Cref{alg:proper-testable-learner} with parameter $\eta=1/(20000\Cagn^2)$ to get a hypothesis $\vec v^{(0)}$ with small distance with $\vec v^\ast$. From \Cref{lem:constant-learn}, \Cref{alg:proper-testable-learner} either reports that $\vec x$-marginal of $D$ is not $\normal(\vec 0, \vec I)$ or outputs a vector $\vec v^{(0)}$ small distance with $\vec v^\ast$.  If \Cref{alg:proper-testable-learner} reports that the $\vec x$-marginal of $D$ is not $\normal(\vec 0, \vec I)$, we can terminate the algorithm. Conditioned on the event that the algorithm did not terminate, then we have that 
\begin{align}
\snorm{2}{\vec v^{(0)} - \vec v^\ast} \leq \Cagn  \sqrt{ \opt + \eta }. 
\end{align}
We consider two cases depending on how large the value of $\opt$ is. If $\opt > 1/(20000 \Cagn^2)$, then any unit vector achieves constant error; therefore, conditioned that the algorithm did not terminate on any proceeding test, any vector we output will satisfy the guarantees of \Cref{thm:testable-learning-agnostic}. For the rest of the proof, we consider the case where $\opt \leq 1/(20000 \Cagn^2)$. In this case, $\snorm{2}{\vec v^{(0)} - \vec v^\ast} \leq 1/100$, this means that \Cref{alg:raw_sieve} on Lines \ref{alg:start-loop}-\ref{alg:endloop} will decrease the distance between the current hypothesis and $\vec v^\ast$.

Conditioned on the event that the algorithm did not terminate at Lines \ref{alg:start-loop}-\ref{alg:endloop} of \Cref{alg:raw_sieve}, we claim that there must have some $0 \leq t^\ast \leq \log(1/\eps)$ such that $\snorm{2}{ \vec v^{(t^\ast)} - \vec v^\ast } \leq O\lp(\OPT + \eps\rp)$.
Let $t'\in \N$ be the maximum value so that $2^{-t'}/100\geq 40000\OPT$, then, for all $t\leq \min(t',\log(1/\eps))$ it holds that $2^{-t}/100\geq 40000\OPT$. From \Cref{lem:induction}, we have that for all $t\leq \min(t',\log(1/\eps))$ it holds that
\[
\snorm{2}{ \vec v^\ast - \vec v^{(t)} }\leq 2^{-t-1}/100\;.
\]
From the above, note that if $t'>\log(1/\eps)$ then 
$\snorm{2}{ \vec v^{(  \log(1/\eps) )} - \vec v^\ast } \leq \eps/100$. If $t'\leq \log(1/\eps)$, we have that $\snorm{2}{ \vec v^{(t' )} - \vec v^\ast } \leq O(\OPT)$, which proves our claim. 
It remains to show that \Cref{alg:raw_sieve} will return a vector $\vec v'$ so that $\Pr_{ (\vec x, y) \sim D }[ \sgn( \vec v' \cdot \vec x  ) \neq y ]=O(\opt+\eps)$.
From \Cref{lem:wedge-bound}, conditioned that \Cref{alg:raw_sieve} did not terminate on Lines \ref{alg:final loop}-\ref{alg-final-final-loop}, we have that for all vectors $\vec v^{(0)},\ldots,\vec v^{(\log(1/\eps))}$ generated on Lines \ref{alg:start-loop}-\ref{alg:endloop} of \Cref{alg:raw_sieve}, we have that
$$
\Pr_{ (\vec x, y) \sim D }[ \sgn( \vec v^{(t)} \cdot \vec x  ) \neq y ] 
\leq O \lp( \snorm{2}{ \vec v^{(t)} - \vec v^\ast } \rp)\;.
$$
Hence, we can conclude that $\Pr_{ (\vec x, y) \sim D }[ \sgn( \vec v^{(t')} \cdot \vec x  ) \neq y ]  \leq O \lp( \OPT + \eps \rp)$. From VC inequality, we have that with at most $\widetilde{O}(\log(1/\tau)/\eps^2)$ samples, we can estimate the empirical probabilities of $\Pr_{ (\vec x, y) \sim D }[ \sgn( \vec v \cdot \vec x  ) \neq y ] $ for all the vectors generated by our algorithm up to error $\eps$. 
Since we return the one with the smallest empirical error, the returned hypothesis has error less than $O \lp( \OPT + \eps \rp)$. To conclude the proof, note that \Cref{alg:proper-testable-learner,alg:wedge-bound,alg:localized-update} use $N=\poly(d,1/\eps)$ samples and $\poly(d,N)$ runtime and \Cref{alg:raw_sieve} use each algorithm at most $O(\log(1/\eps))$ many iterations, therefore the total sample complexity of \Cref{alg:raw_sieve} is  $N=\poly(d,1/\eps)$ with $\poly(d,N)$ runtime.
\end{proof}

\bibliographystyle{alpha}
\bibliography{mydb}

\newcommand{\etalchar}[1]{$^{#1}$}
\begin{thebibliography}{DKTZ20b}

\bibitem[ABL17]{ABL17}
P.~Awasthi, M.~F. Balcan, and P.~M. Long.
\newblock The power of localization for efficiently learning linear separators
  with noise.
\newblock {\em J. {ACM}}, 63(6):50:1--50:27, 2017.

\bibitem[Dan15]{Daniely15}
A.~Daniely.
\newblock A {PTAS} for agnostically learning halfspaces.
\newblock In {\em Proceedings of The 28\textsuperscript{th} Conference on
  Learning Theory, {COLT} 2015}, pages 484--502, 2015.

\bibitem[Dan16]{Daniely16}
A.~Daniely.
\newblock Complexity theoretic limitations on learning halfspaces.
\newblock In {\em Proceedings of the 48\textsuperscript{th} Annual Symposium on
  Theory of Computing, {STOC} 2016}, pages 105--117, 2016.

\bibitem[DGJ{\etalchar{+}}10]{DGJ+:10}
I.~Diakonikolas, P.~Gopalan, R.~Jaiswal, R.~Servedio, and E.~Viola.
\newblock Bounded independence fools halfspaces.
\newblock {\em SIAM Journal on Computing}, 39(8):3441--3462, 2010.

\bibitem[DKK{\etalchar{+}}21]{DKKTZ21}
I.~Diakonikolas, D.~M. Kane, V.~Kontonis, C.~Tzamos, and N.~Zarifis.
\newblock Agnostic proper learning of halfspaces under gaussian marginals.
\newblock In {\em Proceedings of The 34\textsuperscript{th} Conference on
  Learning Theory, {COLT}}, 2021.

\bibitem[DKMR22]{DKMR22b}
I.~Diakonikolas, D.~M. Kane, P.~Manurangsi, and L.~Ren.
\newblock Cryptographic hardness of learning halfspaces with massart noise.
\newblock {\em CoRR}, abs/2207.14266, 2022.

\bibitem[DKN10]{DKN10}
I.~Diakonikolas, D.~M. Kane, and J.~Nelson.
\newblock Bounded independence fools degree-$2$ threshold functions.
\newblock In {\em FOCS}, pages 11--20, 2010.

\bibitem[DKPZ21]{DKPZ21}
I.~Diakonikolas, D.~M. Kane, T.~Pittas, and N.~Zarifis.
\newblock The optimality of polynomial regression for agnostic learning under
  gaussian marginals in the {SQ} model.
\newblock In {\em Proceedings of The 34\textsuperscript{th} Conference on
  Learning Theory, {COLT}}, 2021.

\bibitem[DKR23]{DKR23}
I.~Diakonikolas, D.~M. Kane, and L.~Ren.
\newblock Near-optimal cryptographic hardness of agnostically learning
  halfspaces and relu regression under gaussian marginals.
\newblock {\em CoRR}, abs/2302.06512, 2023.

\bibitem[DKS18]{DKS18a}
I.~Diakonikolas, D.~M. Kane, and A.~Stewart.
\newblock Learning geometric concepts with nasty noise.
\newblock In {\em Proceedings of the 50\textsuperscript{th} Annual {ACM}
  {SIGACT} Symposium on Theory of Computing, {STOC} 2018}, pages 1061--1073,
  2018.

\bibitem[DKTZ20a]{DKTZ20}
I.~Diakonikolas, V.~Kontonis, C.~Tzamos, and N.~Zarifis.
\newblock Learning halfspaces with massart noise under structured
  distributions.
\newblock In {\em Conference on Learning Theory, {COLT}}, 2020.

\bibitem[DKTZ20b]{DKTZ20c}
I.~Diakonikolas, V.~Kontonis, C.~Tzamos, and N.~Zarifis.
\newblock Non-convex {SGD} learns halfspaces with adversarial label noise.
\newblock In {\em Advances in Neural Information Processing Systems,
  {NeurIPS}}, 2020.

\bibitem[DKZ20]{DKZ20}
I.~Diakonikolas, D.~M. Kane, and N.~Zarifis.
\newblock Near-optimal {SQ} lower bounds for agnostically learning halfspaces
  and {ReLUs} under {G}aussian marginals.
\newblock In {\em Advances in Neural Information Processing Systems,
  {NeurIPS}}, 2020.

\bibitem[GGK20]{GGK20}
S.~Goel, A.~Gollakota, and A.~R. Klivans.
\newblock Statistical-query lower bounds via functional gradients.
\newblock In {\em Advances in Neural Information Processing Systems,
  {NeurIPS}}, 2020.

\bibitem[GKK22]{Gollakota2022}
A.~Gollakota, A.~Klivans, and P.~Kothari.
\newblock A moment-matching approach to testable learning and a new
  characterization of rademacher complexity.
\newblock {\em arXiv preprint arXiv:2211.13312}, 2022.

\bibitem[GKSV23]{GKSV23}
A.~Gollakota, A.~R. Klivans, K.~Stavropoulos, and A.~Vasilyan.
\newblock An efficient tester-learner for halfspaces.
\newblock {\em CoRR}, abs/2302.14853, 2023.

\bibitem[Hau92]{Haussler:92}
D.~Haussler.
\newblock {Decision theoretic generalizations of the PAC model for neural net
  and other learning applications}.
\newblock {\em Information and Computation}, 100:78--150, 1992.

\bibitem[KKMS08]{KKMS:08}
A.~Kalai, A.~Klivans, Y.~Mansour, and R.~Servedio.
\newblock Agnostically learning halfspaces.
\newblock {\em SIAM Journal on Computing}, 37(6):1777--1805, 2008.
\newblock Special issue for FOCS~2005.

\bibitem[KLS09]{KLS:09jmlr}
A.~Klivans, P.~Long, and R.~Servedio.
\newblock {Learning Halfspaces with Malicious Noise}.
\newblock {\em Journal of Machine Learning Research}, 10:2715--2740, 2009.

\bibitem[KSS94]{KSS:94}
M.~Kearns, R.~Schapire, and L.~Sellie.
\newblock {Toward Efficient Agnostic Learning}.
\newblock {\em Machine Learning}, 17(2/3):115--141, 1994.

\bibitem[Ros58]{Rosenblatt:58}
F.~Rosenblatt.
\newblock The {P}erceptron: a probabilistic model for information storage and
  organization in the brain.
\newblock {\em Psychological Review}, 65:386--407, 1958.

\bibitem[RV22]{RV22a}
R.~Rubinfeld and A.~Vasilyan.
\newblock Testing distributional assumptions of learning algorithms.
\newblock (arXiv:2204.07196), 2022.

\bibitem[Tie22]{Tiegel22}
S.~Tiegel.
\newblock Hardness of agnostically learning halfspaces from worst-case lattice
  problems.
\newblock {\em CoRR}, abs/2207.14030, 2022.

\bibitem[Val84]{val84}
L.~G. Valiant.
\newblock A theory of the learnable.
\newblock In {\em Proc.\ 16th Annual ACM Symposium on Theory of Computing
  (STOC)}, pages 436--445. ACM Press, 1984.

\end{thebibliography}
\appendix

\end{document}